\newtheorem{lemma}{Lemma}
\newtheorem{theorem}{Theorem}
\DeclarePairedDelimiter\ceil{\lceil}{\rceil}
\DeclarePairedDelimiter\floor{\lfloor}{\rfloor}
\begin{document}

% paper title
\title{Probabilistic Completeness of\\Randomized Possibility Graphs Applied to \\Bipedal Walking in Semi-unstructured Environments}

% You will get a Paper-ID when submitting a pdf file to the conference system
\author{Author Names Omitted for Anonymous Review. Paper-ID 127}

%\author{\authorblockN{Michael Shell}
%\authorblockA{School of Electrical and\\Computer Engineering\\
%Georgia Institute of Technology\\
%Atlanta, Georgia 30332--0250\\
%Email: mshell@ece.gatech.edu}
%\and
%\authorblockN{Homer Simpson}
%\authorblockA{Twentieth Century Fox\\
%Springfield, USA\\
%Email: homer@thesimpsons.com}
%\and
%\authorblockN{James Kirk\\ and Montgomery Scott}
%\authorblockA{Starfleet Academy\\
%San Francisco, California 96678-2391\\
%Telephone: (800) 555--1212\\
%Fax: (888) 555--1212}}

% avoiding spaces at the end of the author lines is not a problem with
% conference papers because we don't use \thanks or \IEEEmembership

% for over three affiliations, or if they all won't fit within the width
% of the page, use this alternative format:
% 
%\author{\authorblockN{Michael Shell\authorrefmark{1},
%Homer Simpson\authorrefmark{2},
%James Kirk\authorrefmark{3}, 
%Montgomery Scott\authorrefmark{3} and
%Eldon Tyrell\authorrefmark{4}}
%\authorblockA{\authorrefmark{1}School of Electrical and Computer Engineering\\
%Georgia Institute of Technology,
%Atlanta, Georgia 30332--0250\\ Email: mshell@ece.gatech.edu}
%\authorblockA{\authorrefmark{2}Twentieth Century Fox, Springfield, USA\\
%Email: homer@thesimpsons.com}
%\authorblockA{\authorrefmark{3}Starfleet Academy, San Francisco, California 96678-2391\\
%Telephone: (800) 555--1212, Fax: (888) 555--1212}
%\authorblockA{\authorrefmark{4}Tyrell Inc., 123 Replicant Street, Los Angeles, California 90210--4321}}

\maketitle

\begin{abstract}
We present a theoretical analysis of a recent whole body motion planning method, the Randomized Possibility Graph \cite{grey2017footstep}, which uses a high-level decomposition of the feasibility constraint manifold in order to rapidly find routes that may lead to a solution. These routes are then examined by lower-level planners to determine feasibility. In this paper, we show that this approach is probabilistically complete for bipedal robots performing quasi-static walking in ``semi-unstructured'' environments. Furthermore, we show that the decomposition into higher and lower level planners allows for a considerably higher rate of convergence in the probability of finding a solution when one exists. We illustrate this improved convergence with a series of simulated scenarios.
\end{abstract}

\IEEEpeerreviewmaketitle

\section{Introduction}

The goal of deploying humanoid robots into complex and challenging terrain motivates us to examine locomotion planning methods that can offer completeness guarantees with maximum autonomy. Locomotion planning methods tend to have various scopes and limitations depending on their underlying algorithms. Some methods can offer guarantees for either completeness or probabilistic completeness within their designated scopes. Other methods might lack proofs of completeness but can perform well in practice.

The classic approach to locomotion planning, also called footstep planning, was introduced by \citet{kuffner2001footstep}\cite{chestnutt2003planning}. In that work, the set of all possible footstep actions is discretized into a finite action set. From there, A* with an admissible heuristic can be used to find a solution which is globally optimal with respect to the action set. When the predetermined action set is sufficient to find a solution, this method is complete. However, the action set determines the branching factor of the search, so having enough actions to satisfy challenging scenarios could make the search intractable.

A mixed-integer optimization approach by \citet{deits2014footstep} decomposes the obstacle-free space into convex regions wherein a bounding box representation of the robot can fit. With these convex regions, the problem of locomotion planning is formulated as a mixed-integer quadratically constrained quadratic program. This provides completeness and global optimality guarantees, as long as the convex regions are sufficient for finding a solution. However, in the implementation that was presented, the robot was not able to duck under obstacles or maneuver its upper body to assist in balance, as would be needed to traverse underneath the bars in Fig. \ref{fig:rho_effect}.

The Multi-modal Probabilistic Roadmap (MMPRM) approach by Hauser et al. \cite{hauser2009multi, hauser2011randomized, hauser2008motion} uses randomized sampling to find whole body motions over a set of predetermined environmental contacts. Given a suitable set of environmental contact points, this method is proven to be probabilistically complete for the full range of quasi-static motion of the robot. This implies that if a quasi-static solution exists, the probability of it being found will asymptotically approach $1.0$ as run time goes to infinity. The key advantage to this approach is that the full kinematic capabilities of the robot are at its disposal, rather than a limited subset.

\begin{figure}
  \centering
  \begin{subfigure}[b]{0.48\linewidth}
    \includegraphics[width=\linewidth]{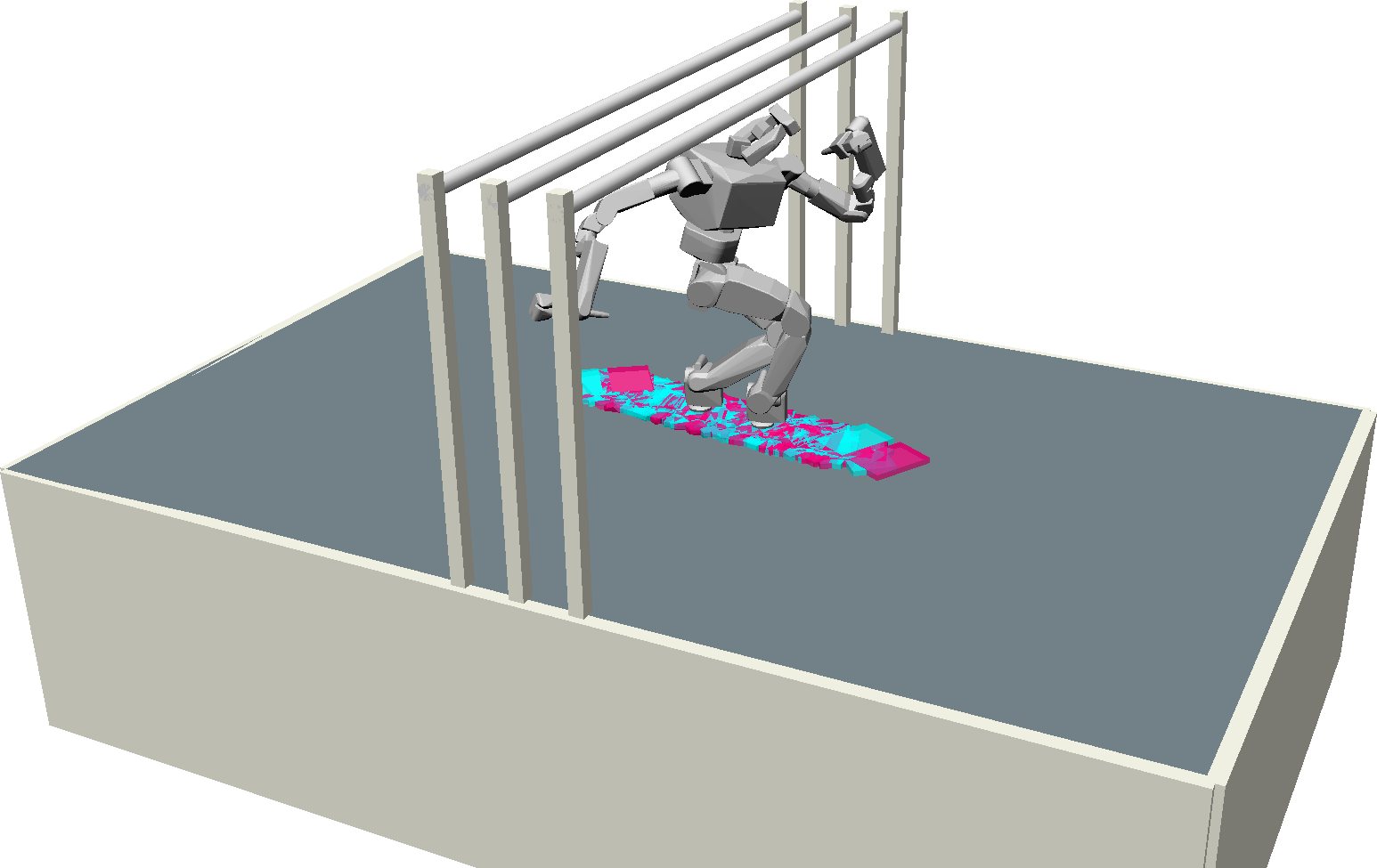}
    \caption{$\rho = 0$}
  \end{subfigure}
  \hfil
%  \begin{subfigure}[b]{0.48\linewidth}
%    \includegraphics[width=\linewidth]{}
%    \caption{$\rho = h_m$}
%  \end{subfigure}
%  \hfil
  \begin{subfigure}[b]{0.48\linewidth}
    \includegraphics[width=\linewidth]{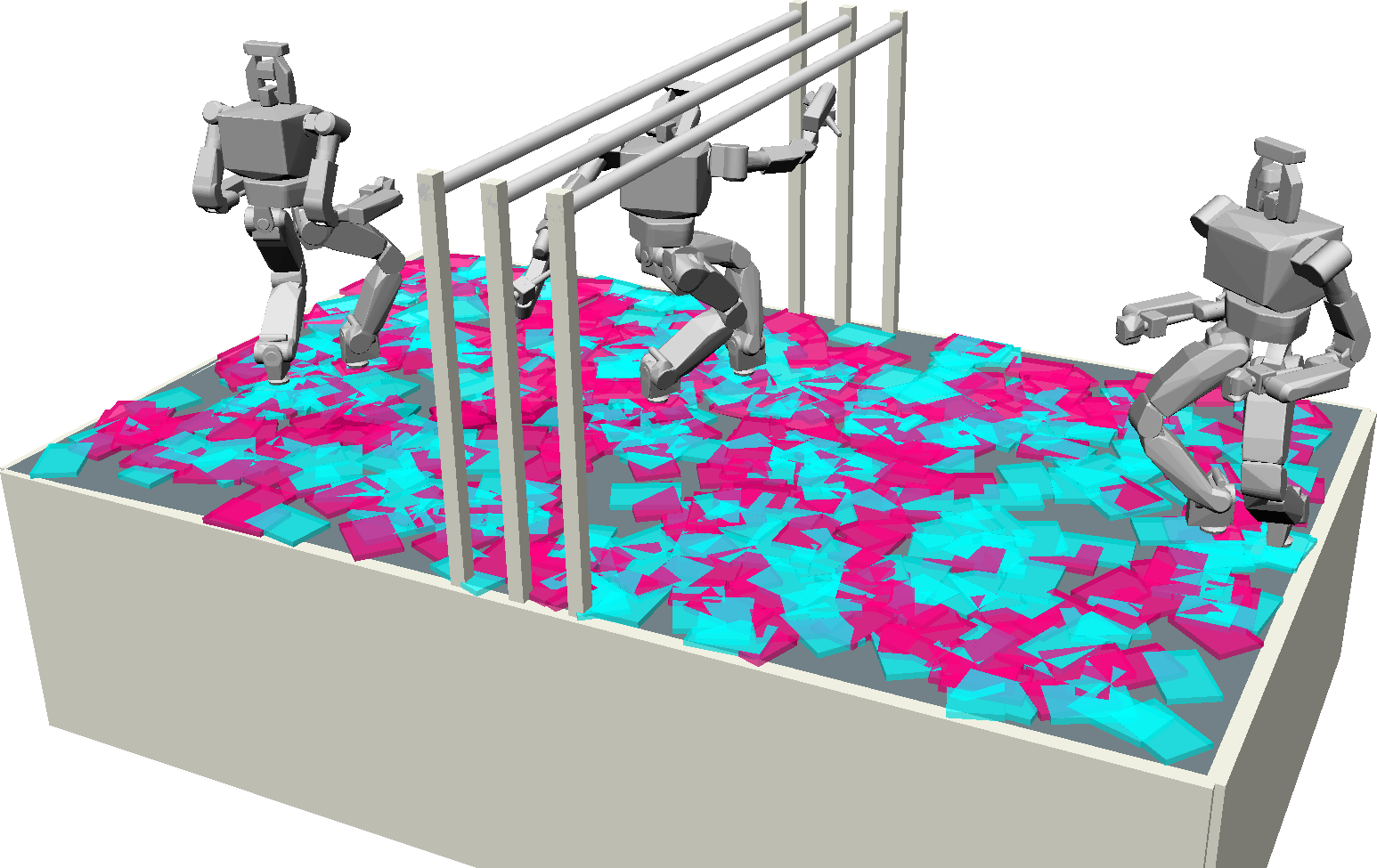}
    \caption{$\rho = 5 \mathcal{R}_\text{max}$}
  \end{subfigure}
  \caption{Illustration of the effect that the parameter $\rho$ has on the Mode Sampling stage. The task for the robot is to pass underneath the set of three bars. Cyan and magenta boxes represent left and right (respectively) foot placement samples, and changing the value for $\rho$ affects the size of the sampling region. $\mathcal{R}_\text{max}$ is the length of the largest step that the robot can take.}
  \label{fig:rho_effect}
\end{figure}

The Randomized Possibility Graph (RPG) method by \citet{grey2017footstep} expands on MMPRM.  The purpose of the RPG is twofold: (1) eliminate the requirement that the planner must be provided with a finite set of environmental contact points, and (2) focus the search effort of the low-level MMPRM into regions that are promising for finding a solution. This is accomplished by first exploring the environment at a high level to find routes that are able to satisfy a set of simplified necessary or sufficient conditions. If a route satisfies the sufficient conditions, a fast and efficient planner can be applied to find a walking motion that follows the route. However, if a route only satisfies the simplified set of necessary conditions, then a low-level multi-modal motion planner must be applied to find viable foot placements and a joint motion that satisfies the full set of feasibility constraints, such as maintaining balance and avoiding obstacles.

In this paper, we examine the probabilistic completeness properties of the RPG method when applied to ``semi-unstructured'' environments. We define ``semi-unstructured'' to mean an environment that contains obstacles with arbitrary geometry, but where the walkable ground is flat and even. The restriction to flat and even ground is due to a limitation in the current implementation of foot placement sampling, which is similar to the Task Space Region approach of \citet{tsr}\cite{berenson2010probabilistically}. This restriction will be loosened in future work, possibly by utilizing a reachable space representation similar to the work of \citet{tonneau2015reachability}.

In the theoretical examination, we consider a simplified algorithm which we will call the Worst-case RPG (\mbox{w-RPG}). The \mbox{w-RPG} exhibits the worst-case behavior of the ordinary RPG algorithm, which is what the ordinary algorithm would degenerate into when none of its built-in performance optimizations are effective. Since the ordinary RPG will have strictly better performance, the analysis of \mbox{w-RPG} represents a lower bound on the worst-case performance of the ordinary algorithm. Therefore, if the \mbox{w-RPG} is proven to be probabilistically complete, then the RPG is as well.

In addition to the proof of probabilistic completeness, we analyze a user-chosen parameter, $\rho$, which affects the rate of convergence. This analysis provides hints for choosing a value for $\rho$ that will provide reliable convergence. We also provide empirical data from simulation trials where the parameter is varied to demonstrate the quantitative impact of this parameter.

\section{Problem Definition}

The work by \citet{grey2017footstep} examined the application and performance of Randomized Possibility Graphs for solving locomotion planning problems in semi-unstructured environments. In this context, ``semi-unstructured'' means the walkable terrain is flat and even, but the environment contains 3D obstacles with arbitrary geometry. Since the obstacle geometry is arbitrary, the robot may require whole body motions in order to maneuver through the environment, like needing to duck underneath the overhanging bars in Fig. \ref{fig:rho_effect}.

\subsection{Probabilistic Completeness}

A process is considered probabilistically complete if the probability of it \textit{failing} to find a solution when one exists converges asymptotically to zero as the number of samples it uses goes to infinity, i.e. the probability of failure can be written as:
\begin{equation}
\Pr[\text{FAILURE}] \leq \alpha \exp(-\beta N)
\end{equation}
where $\alpha$ and $\beta$ are positive constants greater than zero, and $N$ is the number of samples being used by the process.

\subsection{Modes}

Bipedal robots are hybrid dynamic systems (see Ames et al. \cite{ames2011human, reher2016algorithmic} for examples of detailed hybrid system models for bipeds) which exhibit sequences of discrete \textit{modes}. In the scope of this paper, a mode is defined by the placement of the support foot (or feet, in the case of double-support modes). Each mode corresponds to a set of feasibility constraints which determine whether a given configuration is physically viable for that mode. A mode takes the following form:
\begin{displaymath}
\sigma = 
  \begin{cases}
    x_f \in \mathbb{R}^3 \mid f \in \{\text{Left}, \text{Right}\} & \quad \text{if Single-Support}\\
    \left(x_\text{Left} \in \mathbb{R}^3, x_\text{Right} \in \mathbb{R}^3\right) & \quad \text{if Double-Support}\\
  \end{cases}
\end{displaymath}
where $x_f$ represents a foot placement consisting of two translational dimensions and one rotational dimension for the yaw of the foot.

\subsection{Worst-case Randomized Possibility Graph}

%\begin{figure}
%  \centering
%  \begin{subfigure}[b]{0.48\linewidth}
%    \includegraphics[width=\linewidth]{}
%    \caption{Sufficient collision geometry}
%    \label{fig:sufficient_collision}
%  \end{subfigure}
%  \hfil
%  \begin{subfigure}[b]{0.48\linewidth}
%    \includegraphics[width=\linewidth]{}
%    \caption{Necessary collision geometry}
%    \label{fig:necessary_collision}
%  \end{subfigure}
%  
%  \caption{Examples of collision geometries which illustrate the differences between sufficient and necessary conditions. The RPG uses sufficient conditions to identify easy-to-plan routes through the environment. Necessary conditions are used to identify routes which may be possible to traverse but require additional planning effort.}
%\end{figure}

\begin{figure}
  \centering
  \includegraphics[width=0.9\linewidth]{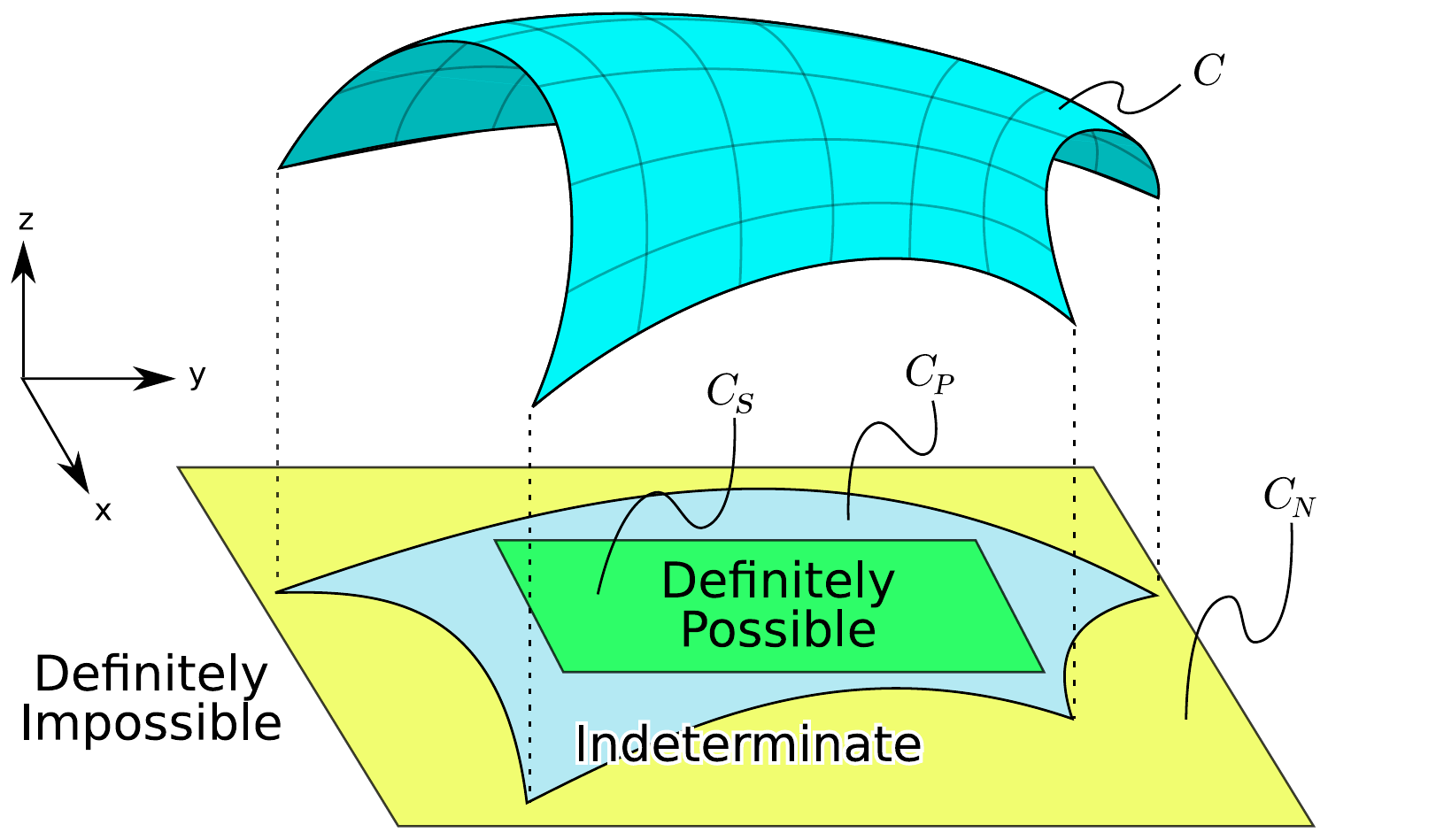}
  \caption{Abstract depiction of the difference between sufficient (green) vs. necessary (yellow) conditions. The constraint manifold $C$ is projected down to a lower-dimensional manifold $C_P$. The sufficient condition manifold, $C_S$ is a simple shape which fits entirely inside of $C_P$ while the necessary condition manifold, $C_N$, is a simple shape which contains all of $C_P$.}
  \label{fig:nec_vs_suf}
\end{figure}

For the analysis of this paper, we consider a simplified version of the RPG scheme which we will call \mbox{w-RPG}. The simplified version discards the use of sufficient conditions (see Fig. \ref{fig:nec_vs_suf}) and the performance benefits that come with them, so performance of \mbox{w-RPG} represents the worst-case performance of RPG. Using only the necessary conditions allows for more straightforward theoretical analysis and helps to establish an upper bound on the probability of failing to find a solution when one exists. There are three stages to \mbox{w-RPG}:

\paragraph{Possibility Exploration}

Instead of growing trees, we explore possibilities by sampling $N_{P}$ points in the Possibility Exploration Space $\mathscr{E}$, which in this context is SE(3). Within the manifold of $\mathscr{E}$ is a submanifold called $C_N$ which represents the points in $\mathscr{E}$ where the simplified set of necessary conditions are satisfied. Any random samples which are not inside of $C_N$ are rejected from the sample set. We then perform an $\mathcal{O}({N_{P}}^2)$ operation attempting to connect every pair of points with a ``straight line''. When a route through $C_N$ is found that might be able to connect the start and goal states, this route is sent to the next stage: Mode Sampling.

\begin{figure}
  \centering
  \subcaptionbox{An example 3D representation of the space that can be reached by the left foot from a fixed location of the right foot (magenta arrow).\label{fig:3D_reachability}}{\includegraphics[width=0.45\linewidth]{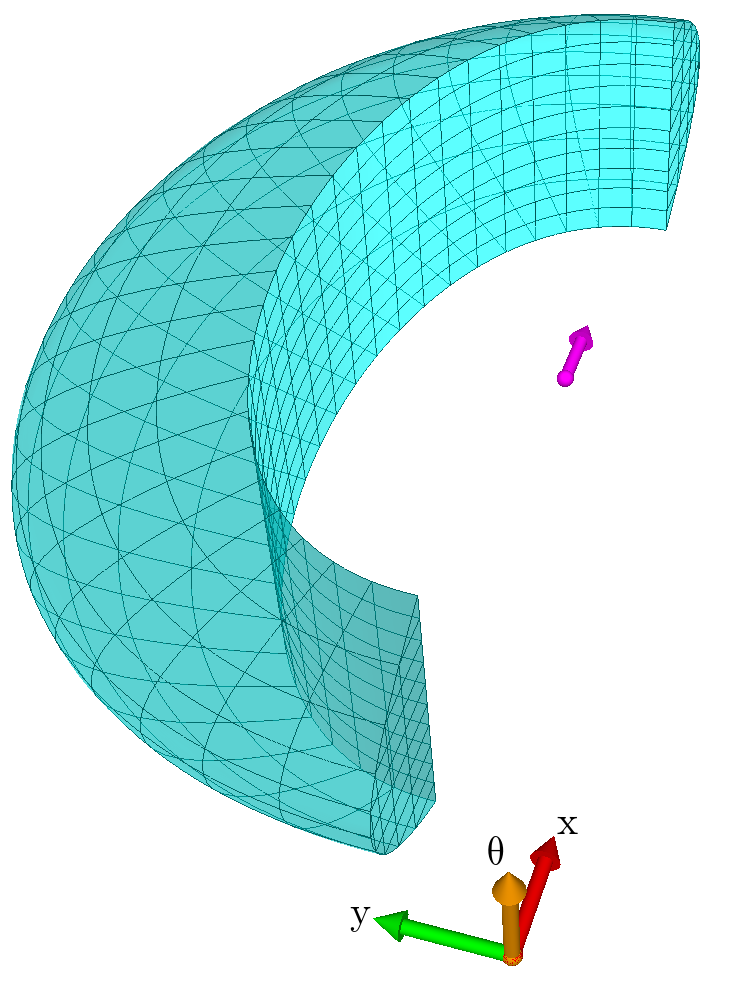}}
  \hfil
  \subcaptionbox{\label{fig:3D_intersection}An example of how the reachable spaces of two different foot locations can intersect. The interior cylinder is a subset of the left-foot locations that can be reached from both right-foot locations.\label{fig:3D_intersection}}{\includegraphics[width=0.45\linewidth]{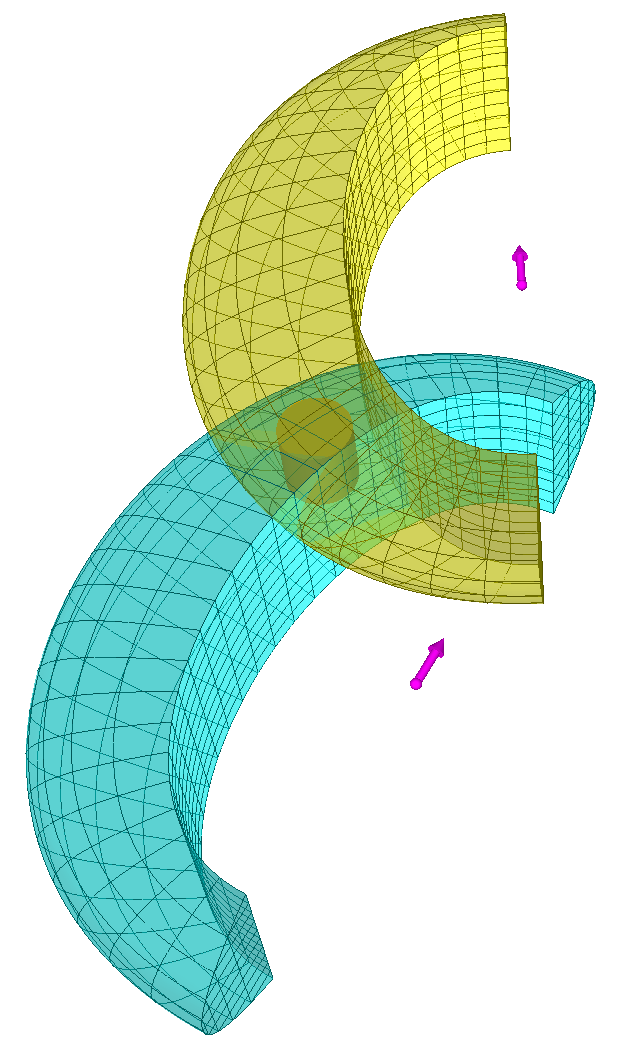}}
  \caption{3D illustrations of what a reachable space might look like for a bipedal system. Magenta arrows represent right-foot placements. Cyan and yellow regions are the corresponding reachable left-foot locations. The axes represent x/y translation and yaw.}
\end{figure}

\paragraph{Mode Sampling}
\label{sec:mode_sampling}
We sample modes uniformly near the route produced by the Possibility Exploration stage. The elements of the route are projected from SE(3) to $\mathbb{R}^2$, keeping only the $(x,y)$ values from the route. Then $N_\sigma$ left and right foot placements are uniformly sampled within a radius $\rho$ of each projected vertex along the route. The union of these circles is referred to as $\mathcal{F}_\sigma$. The orientations of the foot placements are uniformly sampled from $[0, 2\pi)$.

Once the foot placements are sampled, we perform an $\mathcal{O}({N_\sigma}^2)$ operation to test whether each pair of foot placements can reach each other. Each foot placement is assigned a single-support mode based on whether it is viable as a left- or right-foot placement. Each pair of foot placements that can reach each other are assigned a double-support mode.

\paragraph{Multi-modal PRM}

Once a discrete set of modes have been sampled, Multi-modal PRM as described by \citet{hauser2008motion} is used to find valid whole body paths through the modes.

\section{Proof of Completeness}

To prove the probabilistic completeness of the overall procedure, we first prove the probabilistic completeness of the Mode Sampling stage, and then prove the probabilistic completeness of the Possibility Exploration stage. Multi-modal PRM is already proven to be probabilistically complete by \citet{hauser2009multi}, so the final step is to show that the product of multiple dependent probabilistically complete processes is also probabilistically complete.

\subsection{Completeness of Mode Sampling}

\begin{figure}
  \centering
  \begin{subfigure}[b]{0.22\linewidth}
    \captionsetup{justification=centering}
    \centering
    \includegraphics[width=\linewidth]{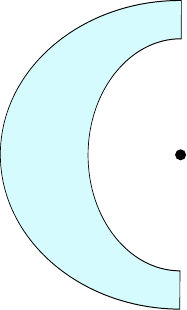}
    \caption{}
    \label{fig:left_reachability}
  \end{subfigure}
  \hfil
  \begin{subfigure}[b]{0.22\linewidth}
    \captionsetup{justification=centering}
    \centering
    \includegraphics[width=\linewidth]{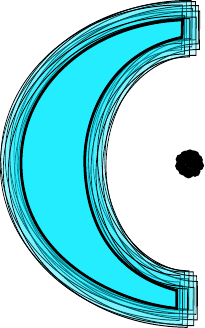}
    \caption{}
    \label{fig:contraction_proof_samples}
  \end{subfigure}
  \hfil
  \begin{subfigure}[b]{0.22\linewidth}
    \captionsetup{justification=centering}
    \centering
    \includegraphics[width=\linewidth]{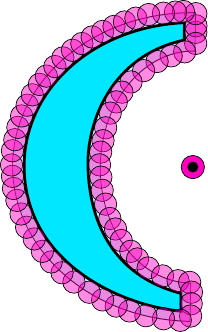}
    \caption{}
    \label{fig:contraction_proof_circles}
  \end{subfigure}
  \caption{(a) A slice, $s$, of the reachable area for the left foot when the right foot is at the black dot. (b) Samples of the set $S$ created by translating $s$ around within a small radius. (c) The shape $\cap S$ created by the intersection of all elements within $S$. This is the same as the original shape, but contracted by circles around the border whose radii are equal to the maximum radius of the translations.}
  \label{fig:contraction_proof}
\end{figure}

To have a viable sequence of modes, each mode in the sequence must be \textit{adjacent} to the mode that comes before and after it. For two modes to be adjacent, their feasible spaces must intersect. A quick way to test for adjacency is to consider the kinematic reachability of one foot with respect to the other foot. For flat and even terrain, the reachable space is a function of the (x,y) position and yaw, $\theta$, of the support foot. An illustration of what such a space might look like can be found in Fig. \ref{fig:3D_reachability}. We assume that the reachable space is a subset of SE(2), containing at least one ball of radius $\epsilon > 0$.

\label{sec:mode_sampling}
For a sequence of modes to be valid, the foot placement of each single-support mode must be simultaneously reachable from the single-support modes that come before and after it, like the cylinder shown in Fig. \ref{fig:3D_intersection}. The following lemma will help us show that there exists a region of foot placements wherein every placement is reachable from \textit{every} member of a region of placements of the other foot.

\begin{lemma}
\label{lem:contraction}
Suppose we have a 2D shape, $s$ (Fig. \ref{fig:left_reachability}). Consider the set of all possible translations of this shape within a circle of fixed radius $r$, $S = \{\sigma \in \mathrm{Trans}(s,\mathbf{x}) \mid |\mathbf{x}| < r\}$ where $\mathrm{Trans}(s,\mathbf{x})$ translates the shape $s$ by vector $\mathbf{x}$ (Fig. \ref{fig:contraction_proof_samples}).

Then the shape of the intersection of all elements in $S$, $\cap S$, is equal to the shape of $s$ contracted by circles of radius $r$ densely packed around its border (Fig. \ref{fig:contraction_proof_circles}).
\end{lemma}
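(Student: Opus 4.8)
The plan is to characterize $\cap S$ by a double inclusion with the ``contracted'' shape, which I will denote $s \ominus B_r$, where $B_r$ is the closed disc of radius $r$ and $\ominus$ is the Minkowski erosion $s \ominus B_r = \{\mathbf{p} \mid \mathbf{p} + B_r \subseteq s\}$. The key observation that makes everything go through is a change of perspective: a point $\mathbf{p}$ lies in $\cap S$ iff $\mathbf{p}$ lies in \emph{every} translate $\mathrm{Trans}(s,\mathbf{x})$ with $|\mathbf{x}| < r$, and $\mathbf{p} \in \mathrm{Trans}(s,\mathbf{x})$ is equivalent to $\mathbf{p} - \mathbf{x} \in s$. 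So $\mathbf{p} \in \cap S$ iff $\mathbf{p} - \mathbf{x} \in s$ for all $|\mathbf{x}| < r$, i.e. iff the whole open disc $\mathbf{p} + B_r^{\circ}$ (equivalently, by a limiting/closure argument, the closed disc) is contained in $s$. That is exactly the erosion condition. So the core of the proof is this one-line equivalence; the rest is making the ``densely packed circles around the border'' picture in the statement rigorous and matching it to $s \ominus B_r$.

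First I would fix conventions: state that $s$ is a closed (or regular closed) subset of $\mathbb{R}^2$, that $S$ ranges over translations by $\mathbf{x}$ in the open disc of radius $r$, and note that whether we use the open or closed disc for the translations only changes $\cap S$ on a boundary set of measure zero, so ``shape'' (= closure of interior, or area) is unaffected; I would take closed discs for cleanliness. Second, prove the equivalence $\mathbf{p}\in\cap S \iff \mathbf{p}+B_r \subseteq s$ from the definitions, using the symmetry of $B_r$ (so that $s \ominus B_r$ may equally be written as $\bigcap_{|\mathbf{x}|\le r} \mathrm{Trans}(s,\mathbf{x})$). Third, I would reconcile this with the figure's description: the complement of $s \ominus B_r$ inside $s$ is precisely the set of points of $s$ within distance $r$ of $\partial s$ — i.e. the union, over all boundary points $\mathbf{q}\in\partial s$, of the discs $\mathbf{q}+B_r$ intersected with $s$ — which is the ``border contracted by densely packed circles of radius $r$'' described in the lemma. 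This identifies $\cap S$ with the claimed shape.

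I would then record the consequence actually needed downstream: if $s$ contains a ball of radius $\epsilon$ and $r < \epsilon$, then $s \ominus B_r$ is nonempty (it contains the concentric ball of radius $\epsilon - r$), so $\cap S$ has nonempty interior and in fact contains a ball of radius $\epsilon - r > 0$. This is the quantitative fact that the Mode Sampling completeness argument will invoke to guarantee a positive-probability sampling region.

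The main obstacle is not the algebra — the erosion identity is essentially immediate — but the topological bookkeeping around the boundary: the intersection over the \emph{open} disc of translations versus the \emph{closed} disc, and whether $\cap S$ as a raw point set equals $s\ominus B_r$ on the nose or only up to closure/interior. The cleanest fix is to interpret ``shape'' as ``regular closed set'' (closure of interior) throughout, prove the interiors agree, and take closures; alternatively, simply work with the closed disc of translations from the start, in which case $\cap S = s \ominus B_r$ exactly with no caveats. A secondary, minor obstacle is pathological $s$ (e.g. non-regular, with slits or isolated points); restricting to regular closed $s$ — which is all that arises for a reachable region assumed to contain a ball of radius $\epsilon$ — dispenses with these cases, and I would state that restriction explicitly rather than belabor it.
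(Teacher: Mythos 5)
Your proposal is correct and takes essentially the same route as the paper: the paper partitions $s$ into the points with $d(x,b_s)\geq r$ (which survive the intersection) and those with $d(x,b_s)<r$ (which are swept out and coincide with the union of radius-$r$ discs along the boundary), and this is exactly your erosion identity $\cap S = s\ominus B_r$ restated pointwise. Your handling of the open-versus-closed disc of translations and of non-regular shapes is more careful than the paper's, but the underlying argument is the same.
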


\begin{proof}
Define $b_s$ to be the boundary of $s$. The elements of $s$ can be divided into two sets: $\alpha = \{x \in s \mid d(x, b_s) \geq r\}$ and $\beta = \{x \in s \mid d(x, b_s) < r\}$ where $d(x,b_s)$ computes the smallest distance between $x$ and $b_s$.

An element $x \in s$ will \textbf{not} exist in the shape of $\cap S$ if and only if at least one shape in $S$ was transformed by a distance greater than $d(x,b_s)$. Otherwise $x$ cannot be outside the border of any shape in $S$.

By definition, the elements $x \in \alpha$ have the property $d(x, b_s) \geq r$, and every element of $S$ was translated by less than $r$, so all of the elements of $\alpha$ must remain in $\cap S$.

Conversely, the elements $x \in \beta$ have the property $d(x, b_s) < r$. Since $S$ contains elements which have been transformed by a distance up to $r$ in every direction, the elements of $\beta$ cannot remain in $\cap S$. Moreover, the elements of $\beta$ are the same elements that would be covered by circles of radius $r$ which are densely packed around $b_s$. An illustration of this effect can be seen in Fig. \ref{fig:contraction_proof}.
\end{proof}

The effect of lemma \ref{lem:contraction} can be generalized to the 3D shape of Fig. \ref{fig:3D_reachability} by continuously applying it to slices along the $\theta$ axis. If the original shape represented the space that is reachable from $x_f$, then the contracted shape would then represent the set of foot locations that can be reached from \textit{any} location within a cylinder centered around $x_f$.

Now we can derive an upper bound on the probability of failing to sample a set of modes that can enable the system to reach the goal from the start, if such a set of modes exists. Assume there exists \emph{some} solution, which is a function that outputs a configuration and a mode as a function of time:
\begin{displaymath}
\gamma_S : [0, t_f] \longmapsto \mathbb{R}^{N_C} \times \Sigma
\end{displaymath}
where $N_C$ is the size of the configuration space and $\Sigma$ is the set of all possible modes. The configuration output will vary continuously, but the sequence of modes through $[0, t_f]$ will be discrete and finite. Figure \ref{fig:solution_steps} displays an environment with the foot placements of a hypothetical solution that allows the robot to traverse from the bottom left to top right. We will now show that this selection of foot placements is not unique, and that uniform random sampling is a probabilistically complete way of finding a suitable sequence of modes to connect the start and the goal states.

\begin{figure}
  \centering
  \begin{subfigure}[t]{0.7\linewidth}
    \captionsetup{justification=centering}
    \centering
    \includegraphics[width=\linewidth]{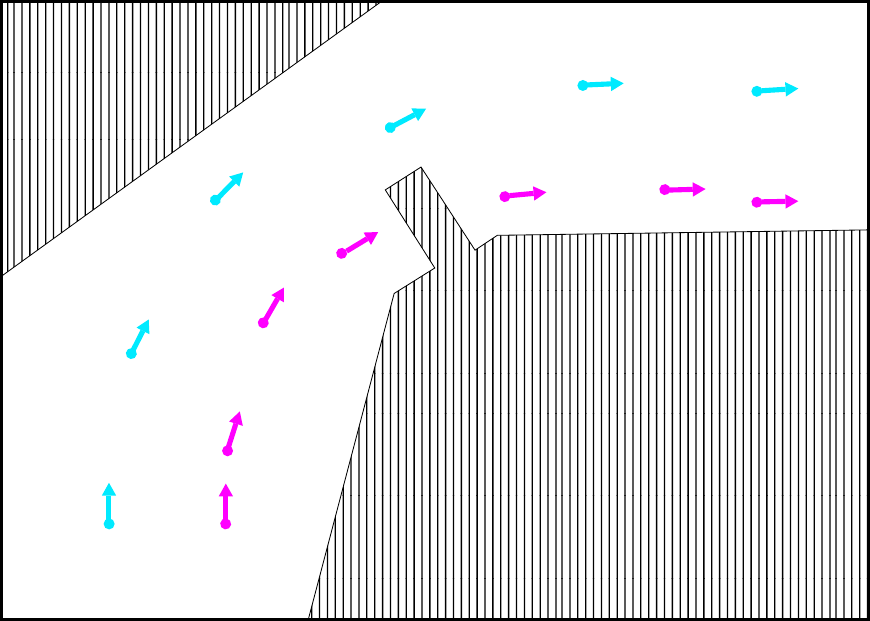}
    \caption{Cyan and magenta arrows represent the foot placements of a hypothetical solution}
    \label{fig:solution_steps}
  \end{subfigure}
  \hfil
  \begin{subfigure}[t]{0.7\linewidth}
    \captionsetup{justification=centering}
    \centering
    \includegraphics[width=\linewidth]{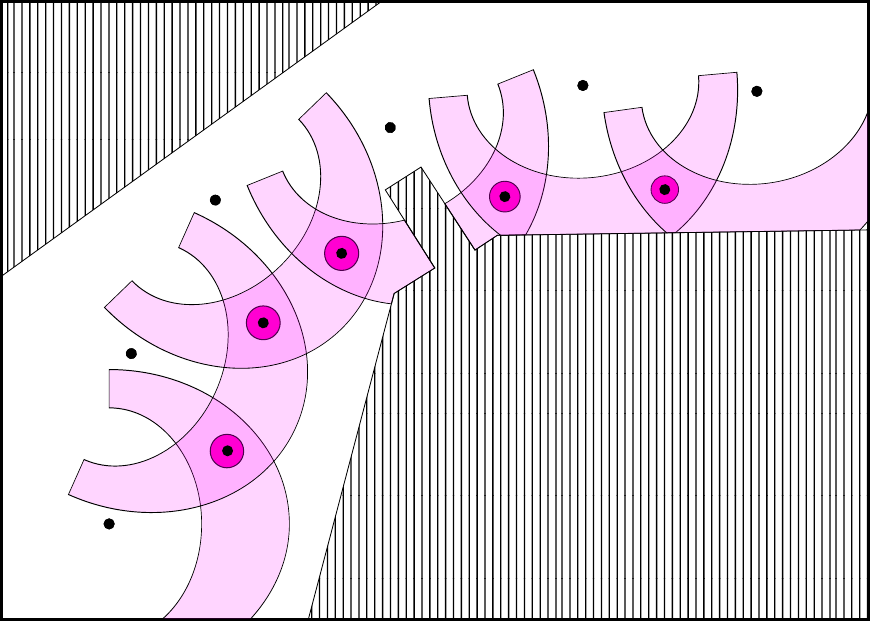}
    \caption{Magenta regions represent areas that the right foot can reach for each given left foot placement.}
    \label{fig:solution_right_reachability}
  \end{subfigure}
  
  \begin{subfigure}[t]{0.7\linewidth}
    \captionsetup{justification=centering}
    \centering
    \includegraphics[width=\linewidth]{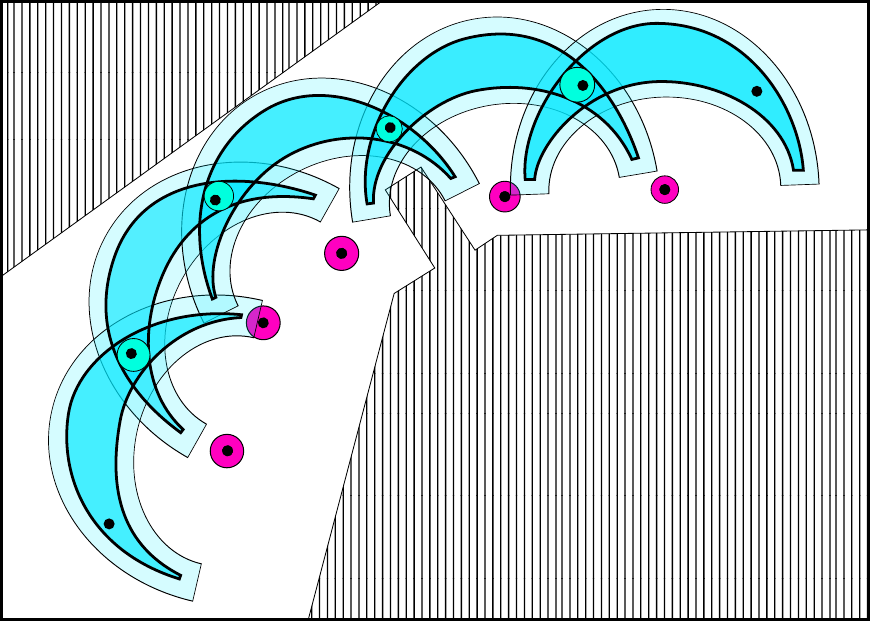}
    \caption{Dark teal regions represent areas that the left foot can reach from any right foot placement within each magenta ball. The lighter teal border shows the original reachable shape, before being contracted.}
    \label{fig:solution_left_reachability}
  \end{subfigure}
  \hfil
  \begin{subfigure}[t]{0.7\linewidth}
    \captionsetup{justification=centering}
    \centering
    \includegraphics[width=\linewidth]{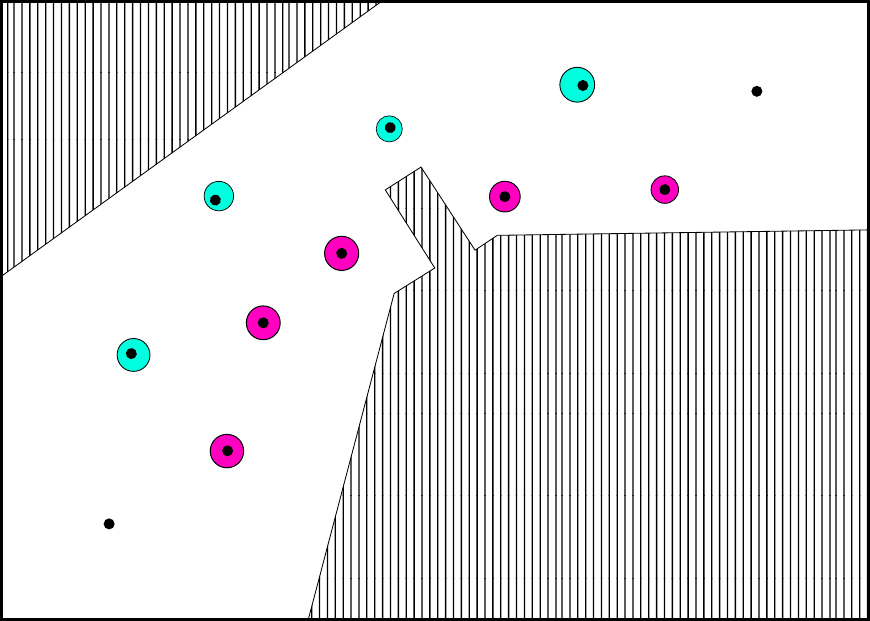}
    \caption{Each ball represents the foot placements that can be reached from any foot placement within the previous and next ball.}
    \label{fig:solution_mode_regions}
  \end{subfigure}
  \caption{An environment consisting of regions where foot placements are valid (white) and invalid (striped). Foot placements may be invalid due to holes in the ground or obstacles on the ground.}
%  \caption{An environment consisting of regions where foot placements are valid (white) and invalid (striped). Foot placements may be invalid due to holes in the ground or obstacles on the ground. Given a hypothetical solution for moving from the bottom left to the top right, we show that the selection of foot placements along the solution are guaranteed to not be unique. Each foot placement in the solution has a continuum of alternative foot placements within some ball around the original selection.}
  \label{fig:solution_modes}
\end{figure}

\begin{theorem}
\label{thm:mode_complete}
Let there be a sequence of $M$ single-support modes $\{\sigma_1, ..., \sigma_{M}\}$ that are sufficient to connect a start state $x_\text{start} = (q_\text{start}, \sigma_\text{start})$ to a goal state $x_\text{goal} = (q_\text{goal}, \sigma_\text{goal})$. (Note that double-support modes exist between the single-support modes within the solution, but the double-support modes are not relevant to this theorem.)

Then the probability that $N_\sigma$ uniform samples of placements for each foot will fail to find a set of modes that can connect $x_\text{start}$ to $x_\text{goal}$ is at most
\begin{equation}
\label{eqn:mode_theorem}
M \left(1 - \beta_m \right)^{N_\sigma}
\end{equation}
where $0 < \beta_m \leq 1$ is a problem-dependent constant.

%where $M$ and $r_m$ are positive constants greater than zero, $0 < {\Delta \theta}_m \leq 2\pi$, and $|\mathcal{F}_\sigma|$ is the planar area of the region in which we sample modes.

\end{theorem}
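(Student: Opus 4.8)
The plan is to reduce the theorem to a coupon-collector–style union bound. The core of the argument is a structural claim: around each single-support mode $\sigma_i$ of the assumed solution one can carve out a region $B_i\subset\mathrm{SE}(2)$ of placements for the foot that supports in $\sigma_i$, chosen so that \emph{any} selection of one placement from each $B_i$ — together with the fixed endpoint modes $\sigma_\text{start},\sigma_\text{goal}$ — forms a sequence of pairwise-adjacent modes. The double-support modes between consecutive single-support samples then automatically exist and are sampled by the $\mathcal{O}(N_\sigma^2)$ reachability test, so the theorem follows once we bound the probability that some $B_i$ fails to capture a sample of the appropriate foot.

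For the structural claim I would use Lemma~\ref{lem:contraction}. Take a consecutive pair $\sigma_i,\sigma_{i+1}$ — say $\sigma_i$ fixes the left foot and $\sigma_{i+1}$ fixes the right foot. Because the solution visits the double-support mode between them, the two placements are mutually reachable; let $R$ be the set of right-foot placements reachable from the left foot at $\sigma_i$. By the standing assumption $R$ contains a ball of radius $\epsilon$, and — this is the point that needs care — $\sigma_{i+1}$ lies in $R$ with a positive margin $\delta_i$ from its boundary (this is where one must invoke openness of the quasi-static feasibility constraints, or read the $\epsilon$-ball assumption as centered at the solution placement). Pick $r>0$ smaller than half of every such margin, and apply Lemma~\ref{lem:contraction} — extended along the $\theta$-axis by the slicing argument noted after its proof — to the reachable shape $R$: translating the support foot by less than $r$ translates $R$ by less than $r$, and the $r$-contraction of $R$ still contains the $r$-ball about $\sigma_{i+1}$, so every right-foot placement within $r$ of $\sigma_{i+1}$ is reachable from every left-foot placement within $r$ of $\sigma_i$. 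Take $B_i$ to be the $r$-ball about $\sigma_i$ in the appropriate copy of $\mathrm{SE}(2)$, shrunk if necessary so the $B_i$ are pairwise disjoint — which we may assume, since a solution that revisits a mode can be shortened (cf.\ Fig.~\ref{fig:solution_modes}). Each $B_i$ has positive measure inside the region being sampled, so a single uniform sample of the relevant foot lands in it with probability $q_i>0$; set $\beta_m=\min_i q_i$.

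Finally I would assemble the bound. Let $I_L$ and $I_R$ index the left- and right-support modes among $\sigma_1,\dots,\sigma_M$. The $N_\sigma$ left-foot samples are i.i.d.\ uniform, so $\Pr[B_i\text{ receives no left sample}]=(1-q_i)^{N_\sigma}$ for $i\in I_L$, and similarly for the right foot; a union bound over all $M$ indices gives
\[
\Pr[\text{some }B_i\text{ is empty}]\;\le\;\sum_{i=1}^{M}(1-q_i)^{N_\sigma}\;\le\;M\,(1-\beta_m)^{N_\sigma}.
\]
On the complementary event, picking any sampled placement from each $B_i$ yields single-support modes $p_1,\dots,p_M$ with $p_i$ adjacent to $p_{i\pm1}$ (and to $\sigma_\text{start},\sigma_\text{goal}$) by the structural claim, so a connecting sequence of modes is present in the sampled set. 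This is exactly \eqref{eqn:mode_theorem}, with $0<\beta_m\le1$ problem-dependent; it has the form $\alpha\exp(-\beta N_\sigma)$ demanded by the definition of probabilistic completeness, with $\alpha=M$ and $\beta=-\ln(1-\beta_m)$.

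The main obstacle I expect is precisely the clearance issue flagged above: upgrading the bare assumption that reachable spaces contain an $\epsilon$-ball into the statement that each solution foot placement sits strictly interior to the relevant reachable region with a uniform positive margin, and then propagating that margin through the yaw dimension when lifting the planar Lemma~\ref{lem:contraction} to $\mathrm{SE}(2)$. A secondary technical point is guaranteeing that the $B_i$ have uniformly positive sampling probability; this is automatic in the statement as phrased, but it is exactly the place where the parameter $\rho$ and the route produced by Possibility Exploration enter, and it is what the later $\rho$ analysis is designed to quantify.
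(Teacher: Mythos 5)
Your proposal is correct and follows essentially the same route as the paper: use Lemma~\ref{lem:contraction} to carve out a positive-measure region of mutually interchangeable placements for each single-support mode of the assumed solution, then union-bound the probability that some region receives none of the $N_\sigma$ samples, giving $M(1-\beta_m)^{N_\sigma}$ with $\beta_m$ the volume fraction of the smallest region in the sampling space. The one substantive difference is how the regions are built, and it bears directly on the obstacle you flag. You center a ball at each solution placement $\sigma_i$, which forces you to argue that every $\sigma_i$ sits strictly interior to its neighbors' reachable sets with a uniform positive margin. The paper instead constructs the regions in two interleaved passes: for odd indices it places a cylinder $\varsigma_{2i+1}$ \emph{anywhere} inside $\mathcal{R}(\sigma_{2i})\cap\mathcal{R}(\sigma_{2i+2})$ (nonempty since it contains the solution's placement), and for even indices it takes the largest cylinder inside the intersection of the two contracted sets $\cap\mathcal{R}(\varsigma_{2i-1})$ and $\cap\mathcal{R}(\varsigma_{2i+1})$ supplied by Lemma~\ref{lem:contraction}; this decouples the regions from the original placements, though it still tacitly assumes those intersections have nonempty interior --- the paper concedes that $r_m$ or ${\Delta\theta}_m$ may vanish ``in pathological cases,'' which is the same narrow-passage degeneracy you identify, so neither argument actually discharges it. Your other concerns are likewise shared by the paper: the lift of the purely translational Lemma~\ref{lem:contraction} to SE(2) (a yaw of the support foot is not a translation of the reachable shape) rests only on the informal ``slicing along $\theta$'' remark, and the requirement that $\mathcal{F}_\sigma$ cover every region is exactly what Lemma~\ref{lem:rho_complete} and the condition $\rho\geq 2h_m$ are later introduced to guarantee. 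Your step of shrinking the $B_i$ to be pairwise disjoint is unnecessary for the union bound, but harmless.
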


\begin{proof}
For a sequence of alternating single-support modes $\{\sigma_1, ..., \sigma_{M}\}$ to be valid, it is necessary for $\sigma_{i+1}$ to be reachable from $\sigma_i$. Moreover, due to the symmetry of reachability, it is also necessary for $\sigma_i$ to be reachable from $\sigma_{i+1}$.

The space of foot placements that are reachable from $\sigma_i$ is given by the set $\mathcal{R}(\sigma_i)$. Therefore, for each mode $\sigma_{2i+1}, i=0,...,\floor{\frac{M-1}{2}}$ we can identify a range of alternative foot placements by taking the intersection $\Sigma_{2i+1} = \mathcal{R}(\sigma_{2i}) \cap \mathcal{R}(\sigma_{2i+2})$. The foot placement for $\sigma_{2i+1}$ can be replaced by any element in $\Sigma_{2i+1}$ without affecting the validity of the mode sequence, because all elements in $\Sigma_{2i+1}$ are reachable from the modes that come both before and after $\sigma_{2i+1}$. Examples of $\Sigma_{2i+1}$ can be seen in the overlapping magenta regions of Fig. \ref{fig:solution_right_reachability}.

Let us construct a cylinder named $\varsigma_{2i+1}$ of radius $r_{2i+1}$ within each $\Sigma_{2i+1}$ for $i=0,...,\floor{\frac{M-1}{2}}$ (see Fig. \ref{fig:3D_intersection} for a 3D illustration of such a cylinder, and Fig. \ref{fig:solution_right_reachability} for an overhead view of a sequence of cylinders). For each $\varsigma_{2i+1}$, the set of foot placements which are reachable by every member of the cylinder will be $\cap \mathcal{R}(\varsigma_{2i+1})$. From Lemma \ref{lem:contraction}, we know that the shape of this intersection will be the ordinary shape of reachability but contracted by circles of $r(\varsigma_{2i+1})$ densely packed around the border. These contracted regions are illustrated in Fig. \ref{fig:solution_left_reachability}. The cylinder also has a height, ${\Delta \theta}_{2i+1}$, which is chosen in conjuncture with $r_{2i+1}$ such that the cylinder fits inside of $\Sigma_{2i+1}$.

Now for $i=1,...,\ceil{\frac{M-1}{2}}$ choose the largest cylinder available within the intersection $\{\cap \mathcal{R}(\varsigma_{2i-1})\} \cap \{\cap \mathcal{R}(\varsigma_{2i+1})\}\}$ and call it $\varsigma_{2i}$. Note that $\sigma_0$ and $\sigma_{M+1}$ are the start and goal (respectively) single-support modes which are given by the problem query. It is sufficient to have $\varsigma_0 \equiv \{\sigma_0\}$ and $\varsigma_{M+1} \equiv \{\sigma_{M+1}\}$, because both of those modes are provided without any sampling.

We now have a sequence of cylinders $\varsigma_i, i=1,...,M$ where as long as at least one foot placement from each cylinder is sampled, the set of samples will be sufficient for finding a valid solution that connects the start and goal states. Each cylinder is defined by its radius, $r_i$ and height, ${\Delta \theta}_i$. These parameters would ideally be chosen such that they maximize the volume of the smallest cylinder in the set. Choose $r_m$ and ${\Delta \theta}_m$ to be the radius and height of the cylinder with minimal volume. The volume of this minimal cylinder is then $\pi r_m^2 {\Delta \theta}_m$.

Suppose we are given a planar region to sample from, $\mathcal{F}_\sigma$. Yaw values can simply be sampled from the range $[0, 2\pi]$. This gives us a sampling volume of $2\pi|\mathcal{F}_\sigma|$. If the $x$/$y$ translations of the foot placements within each $\varsigma_i$ all lie in $\mathcal{F}_\sigma$, and we take $N_\sigma$ independent samples of left-support modes and $N_\sigma$ samples of right-support modes from $\mathcal{F}_\sigma$, then we get

\begin{displaymath}
\begin{split}
\Pr[\text{FAILURE}] & \leq \Pr[\text{Some cylinder $\varsigma_i$ is not sampled}]\\
  & \leq \sum_{i=1}^{M} \Pr[\text{Cylinder $\varsigma_i$ is not sampled}]\\
  & = \sum_{i=1}^{M} \left(1 - \frac{\pi r_i^2 {\Delta \theta}_i}{2\pi|\mathcal{F}_\sigma|}\right)^{N_\sigma}\\
  & \leq M \left(1 - \frac{\pi r_m^2 {\Delta \theta}_m}{2\pi |\mathcal{F}_\sigma|}\right)^{N_\sigma}
\end{split}
\end{displaymath}
which gives us
\begin{equation}
\label{eqn:mode_complete}
\Pr[\text{FAILURE}] \leq M \left(1 - \frac{r_m^2 {\Delta \theta}_m}{2|\mathcal{F}_\sigma|}\right)^{N_\sigma}
\end{equation}

If we then take
\begin{displaymath}
\beta_m = \frac{r_m^2 {\Delta \theta}_m}{2 |\mathcal{F}_\sigma|}
\end{displaymath}
we know that $0 < \beta_m \leq 1$ because $r_m$ and ${\Delta \theta}_m$ are non-zero (except in pathological cases), and the volume of the sampling space must be at least as large as the volume of the smallest cylinder in order to satisfy the assumption that $\mathcal{F}_\sigma$ covers all foot placements in each set $\varsigma_i$. Therefore, substituting $\beta_m$ into equation \ref{eqn:mode_complete} gives us the expression in equation \ref{eqn:mode_theorem}.
\end{proof}

\subsection{Completeness of Possibility Exploration}

Now we consider the Possibility Exploration stage, where we find samples that exist in the necessary condition manifold, $C_N$, and connect them in a graph using geodesics (\citet{kuffner2004effective} provides useful implementation details for sampling points in SE(3) and connecting them). We derive an upper bound for the probability that $N_P$ samples will fail to provide a route that can be used by the Mode Sampling stage to find adequate mode samples for a solution. This proof is largely derived from the proofs of probabilistic completeness presented by \citet{kavraki1998analysis} and \citet{svestka1996probabilistic}, but we also account for the need to obtain an adequate sampling of foot placements, which was not a requirement for prior proofs.

\begin{figure}
  \centering
  \includegraphics[width=\linewidth]{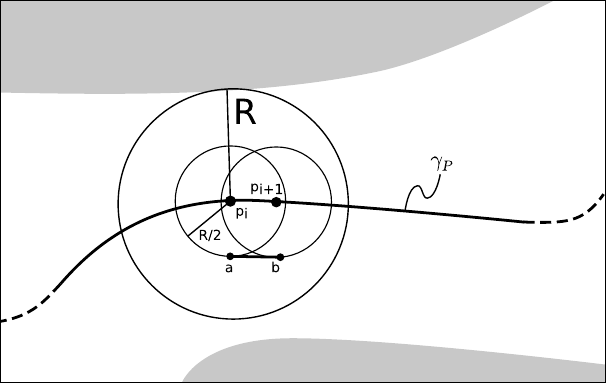}
  \caption{Illustration for the proof of Lemma \ref{lem:R_complete}. The white area represents $C_N$ while gray is $\mathscr{E}\setminus C_N$. $R$ is the minimum distance between the path $\gamma_P$ and the edge of $C_N$.}
  \label{fig:collision_avoidance}
\end{figure}

As before, assume a solution exists in the form:
\begin{displaymath}
\gamma_S : [0, t_f] \longmapsto \mathbb{R}^{N_C} \times \Sigma
\end{displaymath}
We can transform this function into
\begin{displaymath}
\gamma_P(s(t)) = \text{Proj}_{\mathscr{E}}(\gamma_S(t))
\end{displaymath}
where $\text{Proj}_{\mathscr{E}}(x)$ is a function that projects a state $x$ into the Possibility Exploration Space, and $s(t)$ parameterizes $\gamma_P$ by arclength instead of time.

\textbf{Definitions:} We denote $d_{\gamma_P}(s,r)$ to compute the arclength distance between points $\gamma_P(s)$ and $\gamma_P(r)$ along the curve $\gamma_P$. We define $B_r(s)$ to be the set of all points in $\mathscr{E}$ within a ball of radius $r$ centered at $\gamma_P(s)$. Recall that $\mathscr{E}$ is the ``Exploration Space'' from which we randomly sample points to see if they satisfy the necessary conditions. In the context of this paper, $\mathscr{E}$ is equal to SE(3) where the translational dimensions are bounded by a box.

%\begin{theorem}
%\label{thm:pg_complete}
%Let $\gamma_P : [0, L] \longrightarrow \mathscr{E}$ be a path that connects $p_\text{start} = \text{Proj}_{\mathscr{E}}(x_\text{start})$ and $p_\text{goal} = \text{Proj}_{\mathscr{E}}(x_\text{goal})$. Let $R = \inf_{0 \leq s \leq L} r(\gamma_P(s))$ be the minimum distance of the path to the edge of the necessary condition manifold $C_N$, and let $\rho$ determine how far from a candidate path we will attempt to sample modes in the Mode Sampling stage.
%
%Then the probability that $N_P$ uniform samples of $C_N$ will fail to yield a path that can lead to a solution is no greater than
%\begin{equation}
%\label{eqn:pg_complete}
%\frac{L}{\varepsilon}{\left(1 - \frac{\pi^3 \varepsilon^6}{6|C_N|}\right)}^{N_P}
%\end{equation}
%
%where $\varepsilon = \min(R/2, \rho/4)$, and $|C_N|$ is the volume of the necessary condition manifold.
%
%\end{theorem}

\begin{lemma}
\label{lem:R_complete}
Let $\gamma_P : [0, L] \longrightarrow \mathscr{E}$ be a path that connects $p_\text{start} = \text{Proj}_{\mathscr{E}}(x_\text{start})$ and $p_\text{goal} = \text{Proj}_{\mathscr{E}}(x_\text{goal})$. Let $R = \inf_{0 \leq s \leq L} r(\gamma_P(s))$ be the minimum distance of the path to the edge of the necessary condition manifold $C_N$.

Then the probability that $N_P$ uniform samples of $C_N$ will fail to yield a path that can connect from $p_\text{start}$ to $p_\text{goal}$ is no greater than
\begin{equation}
\label{eqn:R_complete}
\frac{L}{\varepsilon}{\left(1 - \frac{\pi^3 \varepsilon^6}{6|C_N|}\right)}^{N_P}
\end{equation}
where $0 < \varepsilon \leq R/2$, and $|C_N|$ is the volume of the necessary condition manifold.

\end{lemma}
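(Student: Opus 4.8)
The plan is to follow the classical probabilistic-completeness argument for PRM-type planners due to \citet{kavraki1998analysis} and \citet{svestka1996probabilistic}, adding the bookkeeping that makes the discovered route close enough to $\gamma_P$ to be usable by Mode Sampling. The first step is to cover $\gamma_P$ with a chain of balls: partition $[0,L]$ into $k=\lceil L/\varepsilon\rceil$ consecutive sub-arcs of arclength at most $\varepsilon$, let $p_0,p_1,\dots,p_k$ be the breakpoints $\gamma_P(s)$ with $p_0=p_\text{start}$ and $p_k=p_\text{goal}$, and consider the balls $B_\varepsilon(p_j)$. Because each $p_j$ lies on $\gamma_P$ and $\varepsilon\le R/2\le R=\inf_s r(\gamma_P(s))$ (so in particular $R>0$, i.e. the assumed solution has positive clearance), each ball $B_\varepsilon(p_j)$ is contained in $C_N$. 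Since $\mathscr{E}=\mathrm{SE}(3)$ is six-dimensional, a ball of radius $\varepsilon$ has volume $\tfrac{\pi^3}{6}\varepsilon^6$, so a uniform sample of $C_N$ falls in a prescribed $B_\varepsilon(p_j)$ with probability $\tfrac{\pi^3\varepsilon^6}{6|C_N|}$.

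The second step is to show that whenever each of the $k-1$ interior balls $B_\varepsilon(p_1),\dots,B_\varepsilon(p_{k-1})$ contains at least one of the $N_P$ samples (the endpoints $p_\text{start},p_\text{goal}$ are supplied by the query, not sampled), those samples together with the endpoints, taken in order, form a route lying in $C_N$. The key geometric claim is that for consecutive route points $q_j\in B_\varepsilon(p_j)$ and $q_{j+1}\in B_\varepsilon(p_{j+1})$ the ``straight-line'' segment between them stays in $C_N$: the segment never strays farther than $\max\{d(q_j,p_j),d(q_{j+1},p_j)\}\le \varepsilon+\varepsilon=2\varepsilon\le R$ from $p_j$, so it stays inside the clearance tube of $\gamma_P$ and hence inside $C_N$; the same estimate joins $p_\text{start}$ to $q_1$ and $q_{k-1}$ to $p_\text{goal}$. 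This is exactly where the factor of two in the hypothesis $\varepsilon\le R/2$ is consumed. I would also record the by-product that every point of the route lies within $2\varepsilon$ of $\gamma_P$; since $\gamma_P$ is the projection of the true solution $\gamma_S$, this closeness is what lets the $\rho$-neighbourhoods placed around the route's vertices during Mode Sampling cover the reachable cylinders $\varsigma_i$ of Theorem \ref{thm:mode_complete} (once $\rho$ is large enough relative to $\varepsilon$ and the kinematic reach) --- the ``adequate sampling of foot placements'' requirement absent from the earlier PRM proofs.

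The third step assembles the bound. The failure event is contained in the event that some interior ball $B_\varepsilon(p_j)$ receives none of the samples. For a fixed $j$, all $N_P$ independent uniform samples miss $B_\varepsilon(p_j)$ with probability $\bigl(1-\tfrac{\pi^3\varepsilon^6}{6|C_N|}\bigr)^{N_P}$, and a union bound over the $k-1<L/\varepsilon$ interior balls gives $\Pr[\text{FAILURE}]\le \tfrac{L}{\varepsilon}\bigl(1-\tfrac{\pi^3\varepsilon^6}{6|C_N|}\bigr)^{N_P}$, which is equation \ref{eqn:R_complete}.

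I expect the main obstacle to be the ``segment stays in $C_N$'' step and, with it, the quantitative link to Mode Sampling. In $\mathrm{SE}(3)$ the ``straight line'' between two samples is a geodesic (or one of the interpolations of \citet{kuffner2004effective}), so the claim that the segment stays within $2\varepsilon$ of an endpoint --- immediate in Euclidean geometry via convexity of the distance to a fixed point --- must be argued for that interpolation; it is harmless for the short hops of length $\le 2\varepsilon$ used here, but needs to be stated carefully because $\mathrm{SO}(3)$ is positively curved. The subtler point is quantifying the coupling with $\rho$: one has to verify that $2\varepsilon$-closeness of the route to $\gamma_P$ in $\mathscr{E}$, pushed through the $\mathbb{R}^2$ projection and the reach bound $\mathcal{R}_\text{max}$, really does yield coverage of each $\varsigma_i$ by $\mathcal{F}_\sigma$, which is what dictates how small $\varepsilon$ (equivalently, how large $N_P$) must be chosen for a given $\rho$.
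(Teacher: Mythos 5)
Your proposal is correct and follows essentially the same argument as the paper: cover $\gamma_P$ with $\lceil L/\varepsilon\rceil$ balls of radius $\varepsilon$, use $\varepsilon \leq R/2$ plus the triangle inequality to show any two samples in consecutive balls both lie in a single clearance ball $B_R(p_i) \subseteq C_N$ so their connecting geodesic stays in $C_N$, compute the per-ball hit probability from the 6-ball volume $\pi^3\varepsilon^6/6$, and finish with a union bound. Your explicit caveat about geodesic convexity in $\mathrm{SE}(3)$ and your remarks on the coupling to $\rho$ go slightly beyond what the paper states here (the latter is handled separately in Lemma \ref{lem:rho_complete}), but the core proof is the same.
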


\begin{proof}
Let $n = \ceil{L/\varepsilon}$. We can then find a set of points $\{p_0 = p_\text{start}, p_1, ..., p_n = p_\text{goal} \in \gamma_P \mid  \forall i, d_{\gamma_P}(p_i, p_{i+1}) \leq \varepsilon\}$. Note that
%Let $n = \ceil{L/\varepsilon}$. We can then find a set of points 
%\begin{displaymath}
%\{p_0 = p_\text{start}, p_1, ..., p_n = p_\text{goal} \in \gamma_P \mid  \forall i, d_{\gamma_P}(p_i, p_{i+1}) \leq \varepsilon\}
%\end{displaymath}
%Note that
\begin{equation}
\label{eqn:balls}
B_{R/2}(p_{i+1}) \subseteq B_{R}(p_i), \text{for } i=0,...,n-1.
\end{equation}
This follows from the triangle inequality and the inequality $|\gamma_P(s)-\gamma_P(r)| \leq d_{\gamma_P}(s,r)$. Assume we have the points $a \in B_{\varepsilon}(p_i)$ and $b \in B_{\varepsilon}(p_{i+1}))$. If we enforce $\varepsilon \leq R/2$, then $B_{\varepsilon}(p_i) \subseteq B_{R/2}(p_i)$, and equation \ref{eqn:balls} guarantees that both $a,b \in B_{R}(p_i)$.  Therefore, there is guaranteed to be a geodesic line segment $\overline{ab}$ that lies entirely within $C_N$ and connects the points $a$ and $b$, because every point in $B_R(p_i)$ lies within $C_N$ due to the definition of $R$. This property is illustrated in Fig. \ref{fig:collision_avoidance}.

This observation tells us that it is sufficient to have at least one sample point in each ball $B_{\varepsilon}(p_i), i=1,...,n-1$ for the Possibility Exploration stage to find a path that connects the start point to the goal point, as long as $\varepsilon \leq R/2$. We can sample SE(3) from $\mathbb{R}^6$ without loss of generality using an Euler angle representation of orientation. Therefore the volume of the balls to be sampled can be computed based on a 6-ball: $\pi^3 \varepsilon^6/6$. Taking $N_P$ independent samples from $C_N$, we find
\begin{equation}
\label{eqn:R_proof}
\begin{split}
\Pr[\text{FAILURE}] & \leq \Pr[\text{Some ball is not sampled}]\\
  & \leq \sum_{i=1}^{n-1} \Pr[\text{Ball $B_{\varepsilon}(p_i)$ is not sampled}]\\
  & \leq \frac{L}{\varepsilon}{\left(1 - \frac{\pi^3 \varepsilon^6}{6|C_N|}\right)}^{N_P}
\end{split}
\end{equation}
\end{proof}

\textbf{Definition:} $d_{xy}(\sigma, p)$ computes the distance across the $xy$-plane between the foot location corresponding to the mode $\sigma$ and the point $p$.

\begin{lemma}

\label{lem:rho_complete}
As in Lemma \ref{lem:R_complete}, $\gamma_P : [0, L] \longrightarrow \mathscr{E}$ is a path that connects $p_\text{start}$ and $p_\text{goal}$. Let $h_m$ represent the greatest distance of any foot placement in the union $\cup \varsigma_i, i=1,...,M$ from the point on the path $\gamma_P$ which is closest to that mode:
\begin{equation}
\label{eqn:h_m}
h_m = \sup_{\sigma \in \cup_i \varsigma_i} \inf_{s \in [0,L]} d_{xy}(\sigma, \gamma_P(s))
\end{equation}
Given a value of $\rho \geq 2h_m$ (see Fig. \ref{fig:rho}), the probability that $N_P$ uniform samples of $C_N$ will not be adequate to sample the modes needed for a solution is no greater than
\begin{equation}
\frac{L}{\varepsilon}{\left(1 - \frac{\pi^3 \varepsilon^6}{6|C_N|}\right)}^{N_P}
\end{equation}
where $0 < \varepsilon \leq \rho/4$ and $|C_N|$ is the volume of the necessary condition manifold.
\end{lemma}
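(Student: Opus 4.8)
The plan is to bolt a geometric closeness estimate onto Lemma~\ref{lem:R_complete}. That lemma already produces, from the path $\gamma_P$, a chain of $n = \ceil{L/\varepsilon}$ waypoints $p_0 = p_\text{start}, \dots, p_n = p_\text{goal}$ on $\gamma_P$ with consecutive arclength spacing at most $\varepsilon$, and it shows that if each ball $B_\varepsilon(p_i)$ receives at least one sample $a_i$, then the polyline through the $a_i$'s is a route lying entirely in $C_N$; moreover the probability that $N_P$ uniform samples of $C_N$ miss some $B_\varepsilon(p_i)$ is at most $\tfrac{L}{\varepsilon}\bigl(1 - \tfrac{\pi^3\varepsilon^6}{6|C_N|}\bigr)^{N_P}$, provided $\varepsilon \le R/2$. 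So the only new thing to prove is that, whenever such a route exists, it is \emph{adequate} for the Mode Sampling stage --- that is, the set $\mathcal{F}_\sigma$ obtained as the union of radius-$\rho$ disks centered at the $(x,y)$-projections of the route's vertices $a_i$ contains the $(x,y)$-footprint of every cylinder $\varsigma_j$, $j=1,\dots,M$. This is precisely the hypothesis ``the $x/y$ translations of the foot placements within each $\varsigma_i$ all lie in $\mathcal{F}_\sigma$'' needed to apply Theorem~\ref{thm:mode_complete} downstream.

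The coverage claim is a three-term triangle inequality in the $(x,y)$-plane. Fix any $\sigma \in \varsigma_j$. By the definition of $h_m$ in~\eqref{eqn:h_m} there is an $s^\star \in [0,L]$ with $d_{xy}(\sigma, \gamma_P(s^\star)) \le h_m$. Because the waypoints are spaced at most $\varepsilon$ apart in arclength, some $p_i$ has arclength distance $\le \varepsilon$ from $\gamma_P(s^\star)$, hence $(x,y)$-distance $\le \varepsilon$ as well, using both $|\gamma_P(s)-\gamma_P(r)| \le d_{\gamma_P}(s,r)$ and the fact that projection onto the $(x,y)$ coordinates is non-expansive. Finally the route vertex $a_i$ satisfies $|a_i - p_i| \le \varepsilon$ by construction, so its $(x,y)$-distance to $p_i$ is $\le \varepsilon$ too. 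Summing the three pieces, the $(x,y)$-distance from $\sigma$ to $a_i$ is at most $h_m + 2\varepsilon$. Now invoke the hypotheses: $\rho \ge 2h_m$ gives $h_m \le \rho/2$, and $\varepsilon \le \rho/4$ gives $2\varepsilon \le \rho/2$, so this distance is at most $\rho$. Hence $\sigma$ lies in the radius-$\rho$ disk around the projection of $a_i$, i.e. $\sigma \in \mathcal{F}_\sigma$; since $\sigma$ and $j$ were arbitrary, $\mathcal{F}_\sigma$ covers $\cup_i \varsigma_i$ in the $(x,y)$-plane.

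Assembling the pieces: the event ``$N_P$ samples of $C_N$ are not adequate to sample the modes needed for a solution'' is contained in the event ``some ball $B_\varepsilon(p_i)$ receives no sample,'' because off the latter event the constructed route lies in $C_N$ (Lemma~\ref{lem:R_complete}) and, by the previous paragraph, its $\rho$-neighborhood covers all the cylinders $\varsigma_i$. Therefore the failure probability is bounded by the quantity in~\eqref{eqn:R_proof}, which is exactly the claimed bound, for any $\varepsilon$ with $0 < \varepsilon \le \min(R/2,\ \rho/4)$; the statement singles out $\varepsilon \le \rho/4$ because that is the new restriction the adequacy requirement adds, and the functional form of the bound is identical to that of Lemma~\ref{lem:R_complete}.

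I don't anticipate a real obstacle: once $h_m$, the waypoint spacing, the sampling-ball radius, and the mode-sampling radius $\rho$ are lined up, the argument is pure bookkeeping with the triangle inequality. The only place that needs genuine care is juggling the three metrics in play --- arclength along $\gamma_P$, the ambient $\mathrm{SE}(3)$ distance in $\mathscr{E}$, and the planar $(x,y)$ distance --- and checking that each comparison points the helpful way (arclength dominates chordal distance; projection to $(x,y)$ is non-expansive); that accounting is precisely what forces the coefficient $\rho/4$ rather than something larger. It is also worth stating explicitly that $\varepsilon \le R/2$ is still required for the route to remain in $C_N$, so one really works with $\varepsilon \le \min(R/2, \rho/4)$.
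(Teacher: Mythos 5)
Your proof is correct and takes essentially the same route as the paper's: a waypoint chain spaced $\varepsilon$ apart along $\gamma_P$, followed by a three-term triangle inequality showing every foot placement in $\cup_i \varsigma_i$ lies within $h_m + 2\varepsilon \leq \rho/2 + \rho/2 = \rho$ of some sampled route vertex, so the radius-$\rho$ disks cover the cylinders and the ball-sampling bound from Lemma~\ref{lem:R_complete} carries over. Your explicit remarks that $(x,y)$-projection must be non-expansive and that $\varepsilon \leq R/2$ is still needed (deferred in the paper to Theorem~\ref{thm:pg_complete}) are minor presentational refinements, not a different argument.
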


\begin{figure}
  \centering
  \includegraphics[width=0.86\linewidth]{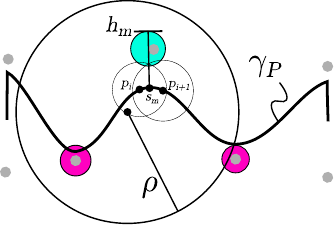}
  \caption{Illustration of the parameter $h_m$. Teal and magenta balls represent the cylindrical regions of acceptable foot placements, $\varsigma_i$, from Sec. \ref{sec:mode_sampling}, and gray dots represent the foot placements that are used by the hypothetical solution of $\gamma_S$. Small black dots are points in the Possibility Exploration Space $\mathscr{E}$.}
  \label{fig:rho}
\end{figure}

\begin{proof} As in the proof for Lemma \ref{lem:R_complete}, let us define a set of $n=\ceil{L/\varepsilon}$ points $p_0=p_\text{start},p_1,...,p_n=p_\text{goal}$ along $\gamma_P$ such that $d_{\gamma_P}(p_i,p_{i+1}) \leq \varepsilon$ for each $i=0,...,n-1$. 

Suppose we choose $\rho$ such that $\rho \geq 2 h_m$. This condition is easily enforced using known information by setting $\rho$ to be at least double the furthest distance that the robot can step, which we will refer to as $\mathcal{R}_\text{max}$. Additionally, suppose we enforce $\varepsilon \leq \rho/4$. Define $s_m$ to be the minimizer for $s$ in equation \ref{eqn:h_m}. Define $p_m$ to be the point from the set $\{p_0, ..., p_{n-1}\}$ that is closest to the value $s_m$. We know that $p_m$ cannot be further than $\rho/4$ from $s_m$, or else another ball would have been placed in the sequence, and that new ball would be closer to $s_m$ than $p_m$, which would contradict the definition of $p_m$. Therefore, no point in $B_{\rho/4}(p_m)$ can be further from $\gamma_P(s_m)$ than $\rho/2$.

Define $\sigma_m$ to be the maximizer for $\sigma$ in equation \ref{eqn:h_m}. If $x_m$ is the translational location of the foot placement for $\sigma_m$, then $x_m$ has a distance $h_m$ from $\gamma_P(s_m)$. Therefore, the triangle inequality tells us that the furthest distance that $x_m$ could possibly have from $p_m$ is $\delta_m \leq \rho/2 + h_m \leq \rho$. Since $x_m$ is the furthest possible foot placement, all other foot placements in the union of $\varsigma_1, ..., \varsigma_M$ must be within a distance $\delta_j \leq \delta_m \leq \rho$ of every point within some ball $B_{\varepsilon}(p_i), i=0,...,n-1$, as long as $\varepsilon \leq \rho/4$. Figure \ref{fig:rho} illustrates this property.

Therefore, as long as $\rho \geq 2h_m$ and $\varepsilon \leq \rho/4$, it is sufficient to have at least one sample point in each ball $B_\varepsilon(p_i), i=1,...,n-1$ for $\mathcal{F}_\sigma$ in the Mode Sampling stage (see Sec. \ref{sec:mode_sampling}) to cover all the modes of $\varsigma_1,...,\varsigma_M$. The probability of failing to sample each ball at least once is no greater than
\begin{displaymath}
\frac{L}{\varepsilon}{\left(1 - \frac{\pi^3 \varepsilon^6}{6|C_N|}\right)}^{N_P}
\end{displaymath}
\end{proof}
\begin{theorem}
\label{thm:pg_complete}
Let $\gamma_P : [0, L] \longrightarrow \mathscr{E}$ be a path that connects $p_\text{start}$ and $p_\text{goal}$. Given $R$ as defined by Lemma \ref{lem:R_complete}, $h_m$ as defined by equation \ref{eqn:h_m}, and $\rho \geq 2h_m$, the probability that $N_P$ uniform samples of $C_N$ will fail to yield a path that can lead to a solution is no greater than
\begin{equation}
\label{eqn:pg_complete}
\frac{L}{\varepsilon}{\left(1 - \frac{\pi^3 \varepsilon^6}{6|C_N|}\right)}^{N_P}
\end{equation}
where $\varepsilon = \min(R/2, \rho/4)$, and $|C_N|$ is the volume of the necessary condition manifold.
\end{theorem}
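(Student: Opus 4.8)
The plan is to observe that Theorem~\ref{thm:pg_complete} is essentially the conjunction of Lemma~\ref{lem:R_complete} and Lemma~\ref{lem:rho_complete}, unified by a single choice of the sampling scale $\varepsilon$. Both lemmas produce a bound of the identical form $\frac{L}{\varepsilon}\left(1 - \frac{\pi^3\varepsilon^6}{6|C_N|}\right)^{N_P}$, and both are proved by the same device: cover the solution projection $\gamma_P$ with $n = \ceil{L/\varepsilon}$ waypoints $p_0 = p_\text{start}, p_1, \ldots, p_n = p_\text{goal}$ spaced at most $\varepsilon$ apart in arclength, and show that placing at least one sample in each ball $B_\varepsilon(p_i)$ is sufficient for success. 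The only difference between the two lemmas is the ceiling they impose on $\varepsilon$: Lemma~\ref{lem:R_complete} needs $\varepsilon \leq R/2$ so that geodesic segments between consecutive samples remain inside $C_N$, while Lemma~\ref{lem:rho_complete} needs $\varepsilon \leq \rho/4$ so that every waypoint lies within distance $\rho$ of every foot placement in $\cup_i \varsigma_i$, which is exactly what makes the radius-$\rho$ region $\mathcal{F}_\sigma$ of the Mode Sampling stage cover all the cylinders $\varsigma_1, \ldots, \varsigma_M$.

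First I would fix $\varepsilon = \min(R/2,\, \rho/4)$, which is strictly positive since $R > 0$ (the path has positive clearance in $C_N$) and $\rho \geq 2h_m > 0$. With this choice both constraints hold at once, so a \emph{single} set of waypoints $\{p_i\}$ and balls $\{B_\varepsilon(p_i)\}$ simultaneously witnesses both sufficient conditions. Concretely, I would state the combined sufficient event: if the $N_P$ samples drawn from $C_N$ place at least one point in each of $B_\varepsilon(p_1), \ldots, B_\varepsilon(p_{n-1})$, then (i) by the argument of Lemma~\ref{lem:R_complete} the Possibility Exploration stage connects $p_\text{start}$ to $p_\text{goal}$ through $C_N$, and (ii) by the argument of Lemma~\ref{lem:rho_complete} the resulting route passes close enough to $\cup_i \varsigma_i$ that Mode Sampling with parameter $\rho$ can recover a mode sequence for the solution. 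Hence any failure of Possibility Exploration to yield a usable route implies at least one of these balls was missed. Then the argument closes with the standard union-bound/volume estimate used in both lemmas — each ball has volume $\pi^3\varepsilon^6/6$ in the $\mathbb{R}^6$ Euler-angle parameterization of the box-bounded SE(3), so
\begin{displaymath}
\Pr[\text{FAILURE}] \leq \sum_{i=1}^{n-1}\Pr[B_\varepsilon(p_i)\text{ not sampled}] \leq \frac{L}{\varepsilon}\left(1 - \frac{\pi^3\varepsilon^6}{6|C_N|}\right)^{N_P},
\end{displaymath}
which is precisely equation~\ref{eqn:pg_complete}.

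The main thing to get right — and the only genuine subtlety — is the justification for taking the \emph{minimum} of the two thresholds (rather than an average, or some interpolation). Here one should note that the bound $\frac{L}{\varepsilon}(1-c\varepsilon^6)^{N_P}$ is decreasing in $\varepsilon$ over the relevant range: as $\varepsilon$ grows the prefactor $L/\varepsilon$ shrinks and the base $1 - c\varepsilon^6$ shrinks, so the tightest guarantee compatible with \emph{both} lemmas is obtained at the largest admissible $\varepsilon$, namely $\min(R/2, \rho/4)$. A secondary point worth spelling out is that conditions (i) and (ii) are driven by the \emph{same} collection of balls, so no probability mass is double-counted and no extra union-bound term is incurred when merging the two arguments; this is what lets the combined bound retain the single-lemma form rather than adding or doubling the two failure probabilities. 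Once these two points are made explicit, the remainder is a verbatim reuse of the ball-covering and union-bound steps already established.
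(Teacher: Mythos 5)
Your proposal is correct and follows essentially the same route as the paper: both lemmas are witnessed by the same set of $n=\ceil{L/\varepsilon}$ balls along $\gamma_P$, so choosing $\varepsilon=\min(R/2,\rho/4)$ satisfies both ceilings at once and the single union-bound estimate yields equation~\ref{eqn:pg_complete}. Your added remarks on the monotonicity of the bound in $\varepsilon$ and on the absence of double-counting are not in the paper's proof but are sound and, if anything, make the argument more explicit.
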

\begin{proof}
Using Lemmas \ref{lem:R_complete} and \ref{lem:rho_complete}, we have established that if we have the conditions $\varepsilon \leq R/2$ and $\varepsilon \leq \rho/4$ where $\rho \geq 2h_m$, it is sufficient to have at least one sample in each ball $B_{\varepsilon}(p_i), i=0,...,n-1$ in order to produce a graph that achieves two properties:

\begin{enumerate}
  \item The graph contains at least one path from $p_\text{start}$ to $p_\text{goal}$ which passes entirely through $C_N$,
  \item The region covered by circles of radius $\rho$, centered at each vertex along one of the paths from $p_\text{start}$ to $p_\text{goal}$ will cover the entirety of $\{\varsigma_1,...,\varsigma_M\}$.
\end{enumerate}
  
Therefore, we choose $\varepsilon = \min(R/2, \rho/4)$, and then the probability that one of the balls $B_{\varepsilon}(p_i)$ will fail to be sampled is no greater than the expression given by equation \ref{eqn:pg_complete}.
\end{proof}

\subsection{Overall Completeness}

The success of the Mode Sampling stage requires the Possibility Exploration stage to succeed in finding a viable candidate path. Similarly, the success of the Multi-modal PRM stage requires the Mode Sampling stage to succeed in finding a set of modes that can reach from the start to the goal. Here we prove that the combination of these dependent processes is probabilistically complete given that the individual processes are each probabilistically complete.

\begin{lemma}
\label{lem:af_bf}
Consider the randomized processes $A$ and $B$. Suppose $B$ depends on $A$ such that $B$ can only succeed after $A$ has succeeded. Given $\Pr[\bar{A}] \leq a_F$ and $\Pr[\bar{B}|A] \leq b_F$, then the probability of both processes failing, $\Pr[\bar{A} \cup \bar{B}]$, is no greater than $a_F + b_F$.
\end{lemma}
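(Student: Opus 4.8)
The plan is to exploit the dependency structure to collapse the event $\bar{A} \cup \bar{B}$ down to a single event, and then bound that event with the law of total probability. First I would observe that, since $B$ can only succeed after $A$ has succeeded, the failure of $A$ forces the failure of $B$; that is, $\bar{A} \subseteq \bar{B}$. Consequently $\bar{A} \cup \bar{B} = \bar{B}$, so despite the informal phrasing (``both processes failing''), the quantity of interest is simply $\Pr[\bar{B}]$. This reduction is the one conceptual step; everything after it is elementary.

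Next I would split $\bar{B}$ according to whether $A$ succeeded: $\Pr[\bar{B}] = \Pr[\bar{B} \cap A] + \Pr[\bar{B} \cap \bar{A}]$. The second term is at most $\Pr[\bar{A}] \le a_F$. For the first term, write $\Pr[\bar{B} \cap A] = \Pr[\bar{B} \mid A]\,\Pr[A] \le \Pr[\bar{B} \mid A] \le b_F$, using the hypothesis $\Pr[\bar{B} \mid A] \le b_F$ together with $\Pr[A] \le 1$. Adding the two bounds yields $\Pr[\bar{B}] \le a_F + b_F$, and hence $\Pr[\bar{A} \cup \bar{B}] \le a_F + b_F$, as claimed.

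There is no real obstacle here; the only thing to be careful about is the interpretation of the statement. If one instead insisted on literally bounding $\Pr[\bar{A} \cap \bar{B}]$ (``both fail''), a smaller bound such as $\min(a_F, b_F)$ would be available, but that is not the quantity needed downstream: the combined planner fails exactly when $B$ fails, and $\bar{B} \supseteq \bar{A}$, so $a_F + b_F$ is the meaningful bound. I would also note that the argument extends immediately by induction to a chain of several dependent probabilistically-complete stages, giving a total failure probability bounded by the sum of the individual conditional failure probabilities; since each such bound has the form $\alpha \exp(-\beta N)$, the sum is again of that form, which is exactly what is needed to conclude probabilistic completeness of the composition of Possibility Exploration, Mode Sampling, and Multi-modal PRM.
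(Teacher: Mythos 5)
Your proof is correct and follows essentially the same route as the paper: both arguments reduce to the decomposition $\Pr[\bar{A}] + \Pr[A]\Pr[\bar{B}\mid A] \leq a_F + b_F$ via the law of total probability, with your preliminary observation that $\bar{A}\cup\bar{B}=\bar{B}$ being an equivalent restatement of the paper's direct disjoint decomposition of the union.
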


\begin{proof}
Define the probability of process $A$ succeeding as $\Pr[A]$ and the probability of it failing as $\Pr[\bar{A}]$. If process $B$ cannot succeed unless process $A$ succeeds, then we know $\Pr[\bar{B}|\bar{A}] \equiv 1.0$ and $\Pr[A|B] \equiv 1.0$. If we are also given $\Pr[\bar{A}] \leq a_F$ and $\Pr[\bar{B}|A] \leq b_F$, we can derive the following:
\begin{equation}
\begin{split}
\Pr[\bar{A} \cup \bar{B}] & = \Pr[\bar{A}] + \left(1 - \Pr[\bar{A}]\right)\Pr[\bar{B}|A] \\
                          & \leq a_F + b_F
\end{split}
\end{equation}\end{proof}

%\begin{equation}
%\begin{split}
%\Pr[\bar{A} \cup \bar{B}] & = \Pr[\bar{A}] + \left(1 - \Pr[\bar{A}]\right)\Pr[\bar{B}|A] \\
%                          & \leq a_F + (1-a_F)b_F \\
%                          & \leq a_F + b_F
%\end{split}
%\end{equation}

%\begin{equation}
%\label{eqn:aub_prb}
%\begin{split}
%\Pr[\bar{A} \cup \bar{B}] & = \Pr[\bar{A}] + \Pr[\bar{B}] - \Pr[\bar{A}]\Pr[\bar{B}|\bar{A}]\\
%    & = \Pr[\bar{A}] + \Pr[\bar{B}] - \Pr[\bar{A}]\\
%    & = \Pr[\bar{B}]
%\end{split}
%\end{equation}
%
%\begin{equation}
%\label{eqn:prb_is}
%\begin{split}
%\Pr[A \cap B] & = \Pr[B]\Pr[A|B] = \Pr[B]\\
%              & = \Pr[A]\Pr[B|A]\\
%\Pr[B] & = \Pr[A]\Pr[B|A]\\
%       & = \Pr[A]\left(1 - \Pr[\bar{B}|A]\right)\\
%       & = \Pr[A] - \Pr[A]\Pr[\bar{B}|A]\\
%\Pr[\bar{B}] & = 1 - \Pr[A] + \Pr[A]\Pr[\bar{B}|A]\\
%       & = \Pr[\bar{A}] + \left(1 - \Pr[\bar{A}]\right)\Pr[\bar{B}|A]
%\end{split}
%\end{equation}

%Combining the results of equations \ref{eqn:aub_prb} and \ref{eqn:prb_is}, then incorporating the known inequalities for $a_F$ and $b_F$, we can get an upper bound on the probability of either process failing:
%
%\begin{equation}
%\begin{split}
%\Pr[\bar{A} \cup \bar{B}] & \leq a_F + (1-a_F)b_F \\
%   & \leq a_F + b_F
%\end{split}
%\end{equation}

%Lemma \ref{lem:af_bf} now allows us to prove probabilistic completeness for the overall process.

\begin{theorem}
The probability of the overall process of the \mbox{w-RPG} failing to find a solution will asymptotically converge to zero as the number of samples used for each stage in the process goes to infinity.
\end{theorem}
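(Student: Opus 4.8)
The plan is to view the \mbox{w-RPG} as a cascade of three randomized subprocesses --- Possibility Exploration ($A$), Mode Sampling ($B$), and Multi-modal PRM ($C$) --- each of which has already been shown (or, in the case of $C$, was shown by \citet{hauser2009multi}) to be probabilistically complete, and then to compose them using Lemma~\ref{lem:af_bf}. The structural fact that makes this work is that the dependency among the stages is exactly the one assumed in Lemma~\ref{lem:af_bf}: Mode Sampling cannot succeed until Possibility Exploration has returned a route lying in $C_N$, and Multi-modal PRM cannot succeed until Mode Sampling has returned a discrete mode set containing an element of every cylinder $\varsigma_i$. Hence $\Pr[\bar B \mid \bar A] = 1$ and $\Pr[\bar C \mid \bar B] = 1$, which is precisely what Lemma~\ref{lem:af_bf} needs.

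Concretely, I would first apply Theorem~\ref{thm:pg_complete} to obtain an unconditional bound $\Pr[\bar A] \le a_F$, where $a_F = \tfrac{L}{\varepsilon}(1 - \tfrac{\pi^3 \varepsilon^6}{6|C_N|})^{N_P}$ with $\varepsilon = \min(R/2,\rho/4)$. Conditioned on $A$ succeeding, Lemma~\ref{lem:rho_complete} guarantees that the resulting sampling region $\mathcal{F}_\sigma$ covers all of $\varsigma_1,\dots,\varsigma_M$, so Theorem~\ref{thm:mode_complete} applies and yields $\Pr[\bar B \mid A] \le b_F = M(1-\beta_m)^{N_\sigma}$. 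Conditioned on $B$ succeeding, the sampled mode graph contains the mode sequence $\{\sigma_1,\dots,\sigma_M\}$ (with its interleaved double-support modes) that connects $x_\text{start}$ to $x_\text{goal}$, which is exactly the setting in which the analysis of \citet{hauser2009multi} gives probabilistic completeness of Multi-modal PRM; write this as $\Pr[\bar C \mid B] \le c_F$ with $c_F \to 0$ as the number $N_M$ of its samples grows. Applying Lemma~\ref{lem:af_bf} first to $(A,B)$ and then to the composite process ``$A$ and $B$'' together with $C$ gives
\begin{equation}
\Pr[\text{\mbox{w-RPG} fails}] \le a_F + b_F + c_F .
\end{equation}

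It then only remains to note that each of the three summands tends to zero: using $(1-\beta)^N \le \exp(-\beta N)$ we have $a_F \le \tfrac{L}{\varepsilon}\exp(-\tfrac{\pi^3\varepsilon^6}{6|C_N|}N_P)$ and $b_F \le M\exp(-\beta_m N_\sigma)$, and $c_F$ has the same exponentially decaying form by Hauser's result, so the bound vanishes as $N_P,N_\sigma,N_M \to \infty$. If one prefers the single-exponential template $\alpha\exp(-\beta N)$, take $N = \min(N_P,N_\sigma,N_M)$, $\beta$ the smallest of the three rate constants, and $\alpha$ the sum of the three leading coefficients.

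The main obstacle is not the probabilistic bookkeeping --- that is just two invocations of Lemma~\ref{lem:af_bf} (or a short induction generalizing it to an $n$-stage chain) --- but rather checking that the conditional hypotheses are legitimately available: one must be sure that a \emph{successful} Possibility Exploration produces a path whose $\rho$-neighborhoods cover the cylinders (this is the content of Lemma~\ref{lem:rho_complete}, which is why $\rho \ge 2h_m$ is required), and that a \emph{successful} Mode Sampling hands Multi-modal PRM a mode set of exactly the form its completeness proof assumes, so that each stage's guarantee can be invoked on the event that the previous stage has already succeeded.
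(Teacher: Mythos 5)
Your proposal is correct and follows essentially the same route as the paper: decompose the pipeline into dependent stages, bound each stage's failure probability via Theorems~\ref{thm:mode_complete} and~\ref{thm:pg_complete}, compose with Lemma~\ref{lem:af_bf} (applied once and then recursively to fold in Multi-modal PRM), and conclude via the exponential bound $(1-x)\leq e^{-x}$ with the rate taken as the minimum over stages. Your treatment is, if anything, slightly more explicit than the paper's about writing out the third stage's term $c_F$ and about verifying that the conditional hypotheses of each stage are available on the event that the previous stage succeeded.
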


\begin{proof}
Consider the Possibility Exploration stage to be process $A$ and the Mode Sampling stage to be process $B$. From Theorems \ref{thm:mode_complete} and \ref{thm:pg_complete}, we get the following expressions:
\begin{equation}
\begin{split}
a_F & \leq \frac{L}{\varepsilon}{\left(1 - \frac{\pi^3 \varepsilon^6}{6|C_N|}\right)}^{N_P}\\
b_F & \leq M \left(1 - \frac{r_m^2 {\Delta \theta}_m}{2|\mathcal{F}_\sigma|} \right)^{N_\sigma}
\end{split}
\end{equation}

We can use the inequality $(1-x) \leq e^{-x}$, for $x \geq 0$ to change these expressions to:
\begin{equation}
\label{eqn:af_bf_desc}
\begin{split}
a_F & \leq \frac{L}{\varepsilon}\exp\left(-\frac{\pi^3 \varepsilon^6}{6|C_N|}N_P\right)\\
b_F & \leq M\exp\left(-\frac{r_m^2 {\Delta \theta}_m}{2|\mathcal{F}_\sigma|}N_\sigma\right)
\end{split}
\end{equation}

Observing that $\alpha_1\exp\left(-\beta_1\right) + \alpha_2\exp\left(-\beta_2\right) \leq \alpha\exp(-\beta)$ where $\alpha = \alpha_1 + \alpha_2$ and $\beta = \min(\beta_1, \beta_2)$ and combining Lemma \ref{lem:af_bf} with the expressions in equation \ref{eqn:af_bf_desc}, we can get

\begin{equation}
\begin{split}
\Pr[\bar{A} \cup \bar{B}] \leq \left(\frac{L}{\varepsilon}+M\right)\exp\left(-\beta\right)\\
\beta = \min\left(\frac{\pi^3 \varepsilon^6}{6|C_N|}N_P, \frac{r_m^2 {\Delta \theta}_m}{2|\mathcal{F}_\sigma|}N_\sigma\right)
\end{split}
\end{equation}

Therefore, as both $N_P$ and $N_\sigma$ go to infinity, the probability of their combined process failing asymptotically approaches zero, making the combined process probabilistically complete.

This argument can be repeated recursively by viewing the combined process of Possibility Exploration and Mode Sampling as a single process upon which the Multi-modal PRM stage depends. Since Multi-modal PRM is known to be probabilistically complete, adding it as a dependent process onto another probabilistically complete process allows the overall process to still be probabilistically complete.
\end{proof}

\section{Analysis}

The expressions which have been derived to prove the probabilistic completeness of the \mbox{w-RPG} also reveal that the multi-stage procedure can offer a better rate of convergence for success than a single-stage procedure would. The parameter $\rho$ is used to restrict the region from which foot placements are sampled during the Mode Sampling stage. This focuses the mode sampling around the candidate route found in the Possibility Exploration stage, ensuring that the samples are conducive toward finding a solution as illustrated in Fig. \ref{fig:rho_effect}. As $\rho$ approaches infinity, the behavior is analogous to eliminating the Possibility Exploration stage altogether and instead merely sampling foot placements uniformly throughout the environment. In this section, we show how the earlier proofs predict an improvement in convergence. We also show simulation results which empirically reinforce this prediction.

\subsection{Theoretical Analysis}

%The upper bound on the probability of failure for Mode Sampling is a function of $1/|\mathcal{F}_\sigma|$. In turn, $|\mathcal{F}_\sigma|$ is a monotonically increasing function of the parameter $\rho$. Suppose we shrink $\rho$ to modify the volume of the mode sampling space by a factor of $\alpha$, where $0 < \alpha < 1$. Defining $|\mathcal{F}_0|$ to be the volume of the mode sampling space prior to shrinking $\rho$, we get the following ratio $\Pr[\text{after}]/\Pr[\text{before}]$ for the probability of failing the Mode Sampling stage after shrinking divided by before shrinking:
%\begin{equation}
%\label{eqn:conv_ms}
%\exp \left(-\frac{r_m^2 {\Delta \theta}_m}{2|\mathcal{F}_\sigma|}N_\sigma\left[ \frac{1-\alpha}{\alpha} \right] \right)
%\end{equation}
%We see that as $\alpha$ approaches zero (which will happen as $\rho$ approaches zero), the contents of the exponent asymptotically approach negative infinity. This means that as $\rho$ is driven to zero, the Mode Sampling stage becomes exponentially less likely to fail and can converge upon a solution more quickly.

Recall that $\mathcal{F}_\sigma$ is the union of circles with radius $\rho$, centered around the $\ceil{L/\varepsilon}-1$ vertices of the projected route from the Possibility Exploration stage. This gives us an upper bound on the area covered by $\mathcal{F}_\sigma$:
\begin{displaymath}
|\mathcal{F}_\sigma | \leq \frac{L}{\varepsilon}\pi \rho^2 \leq 4L\pi \rho
\end{displaymath}
Substituting this into equation \ref{eqn:mode_complete} for $\mathcal{F}_\sigma$, we get an upper bound on the likelihood of failure for Mode Sampling in terms of $\rho$:
\begin{equation}
\label{eqn:conv_ms}
\Pr[\text{Mode Sampling Failure}] \leq M\left(1 - \frac{r_m^2 {\Delta \theta}_m}{8L\pi \rho}\right)^{N_\sigma}
\end{equation}
which implies that minimizing $\rho$ will maximize the rate of convergence for the Mode Sampling stage.

However, there are limits to how small $\rho$ can be shrunk for the formula to hold. In particular, the proof for Theorem \ref{thm:mode_complete} depends on the assumption that $\mathcal{F}_\sigma$ covers the cylinders associated with the parameters $r_i$ and ${\Delta \theta}_i$. If $\mathcal{F}_\sigma$ is shrunk to no longer cover those cylinders, then the formula will not hold and we can no longer guarantee probabilistic completeness or asymptotic convergence. To ensure the formulae hold, the proof for Theorem \ref{thm:pg_complete} suggests $\rho \geq 2\mathcal{R}_\text{max}$ as a lower bound.

It is worth noticing that the formula also predicts the existence of pathological cases which cannot be reliably solved by the w-RPG. Specifically, if $r_m$ or ${\Delta \theta}_m$ have a value close to zero, then it implies that the solution requires a sample from a manifold with nearly zero volume in SE(2). The probability of randomly sampling a point on such a manifold is close to zero, so we could not expect this approach to reliably work, much like the well-known ``narrow passage problem'' \cite{sun2005narrow}. There would need to be some additional information provided to the planner that would allow it to find samples on that smaller manifold. For example, \citet{chestnutt2007adaptive} used evaluations of the terrain data to adjust infeasible footstep locations.

\subsection{Test Results}

\begin{figure}
  \centering
  \begin{subfigure}[t]{0.48\linewidth}
    \captionsetup{justification=centering}
    \centering
    \includegraphics[width=\linewidth]{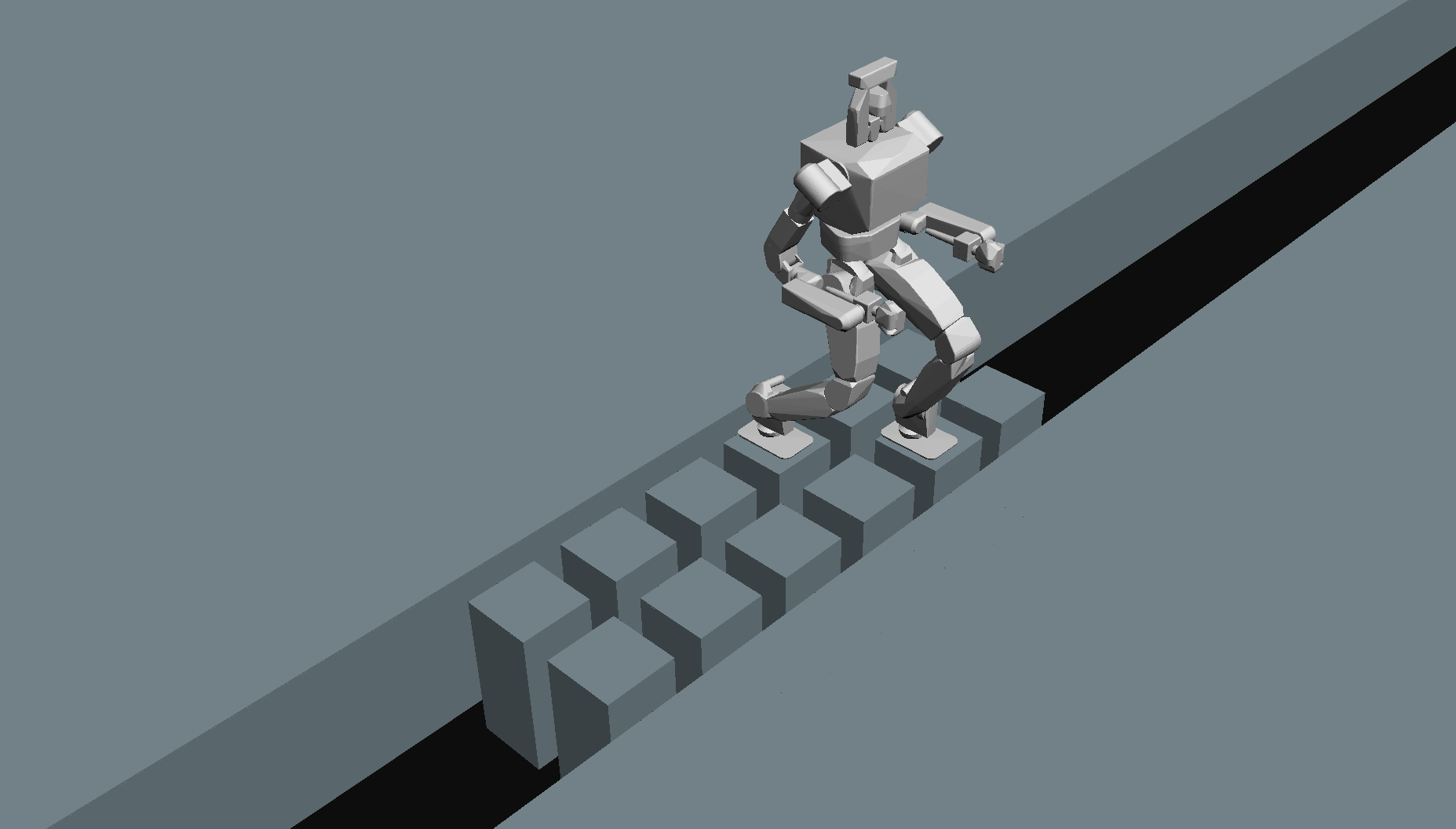}
    \caption{Stepping Stones Scenario}
    \label{fig:stepstones}
  \end{subfigure}
  \hfil
  \begin{subfigure}[t]{0.48\linewidth}
    \captionsetup{justification=centering}
    \centering
    \includegraphics[width=\linewidth]{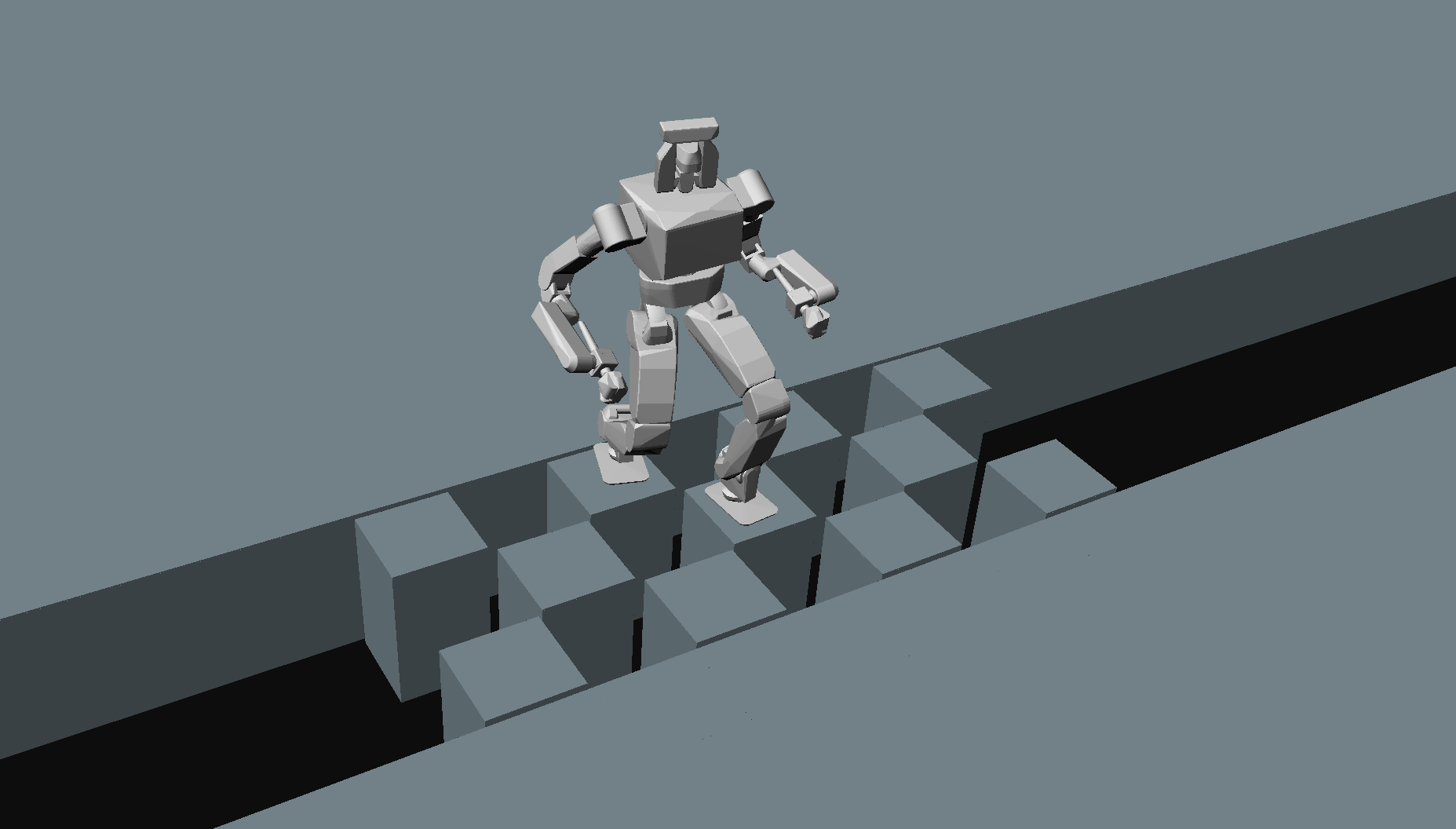}
    \caption{Checkers Scenario}
    \label{fig:checkers}
  \end{subfigure}
  
  \begin{subfigure}[t]{0.48\linewidth}
    \captionsetup{justification=centering}
    \centering
    \includegraphics[width=\linewidth]{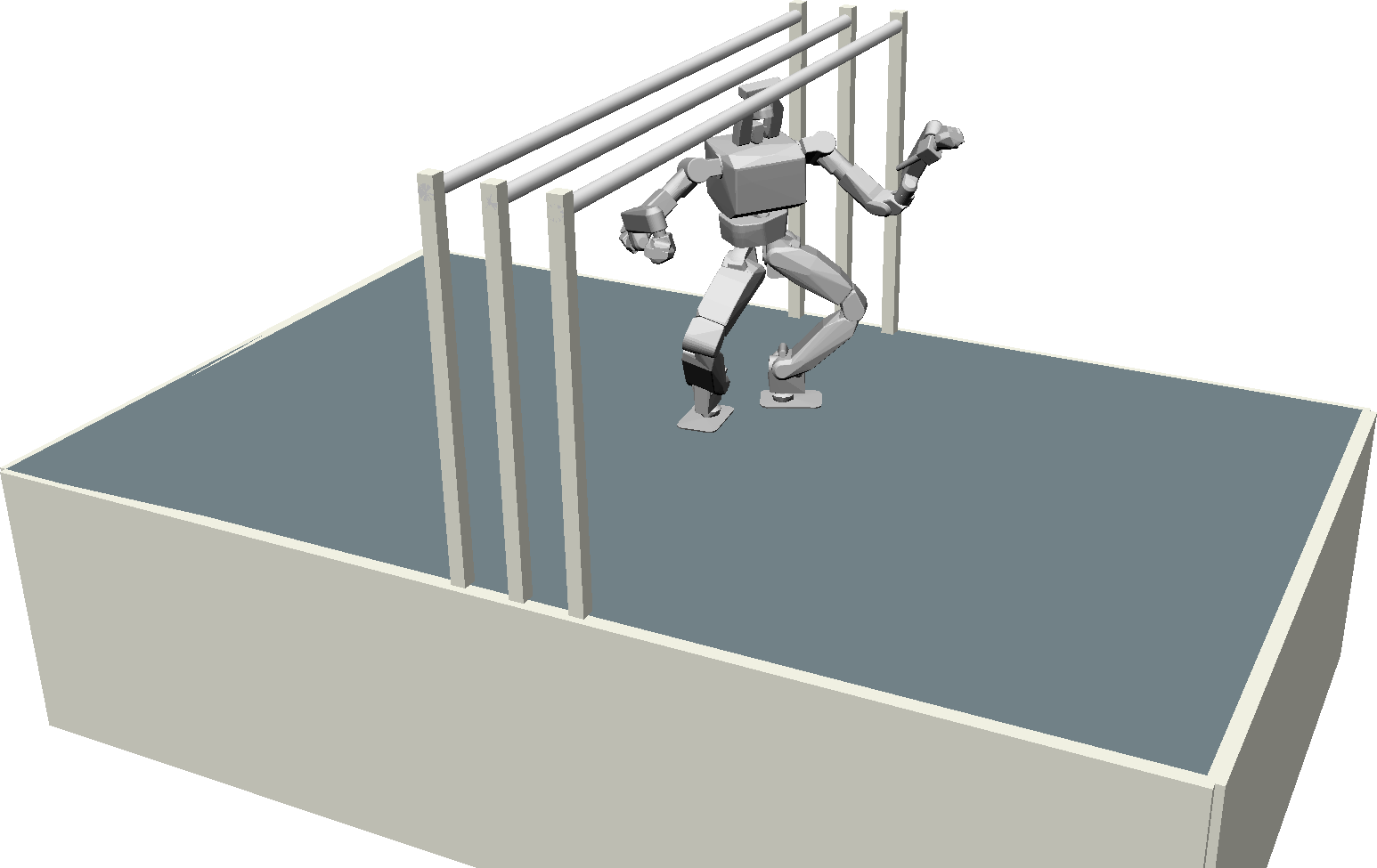}
    \caption{Pass Under Scenario}
    \label{fig:passunder}
  \end{subfigure}
  \caption{Three scenarios used for simulation tests. In (a) and (b), the robot must get across a gap by taking advantage of narrow stepping stones. In (c), the robot must pass underneath a sequence of bars.}
  \label{fig:scenarios}
\end{figure}

\begin{figure}
  \centering
  \includegraphics[width=0.9\linewidth]{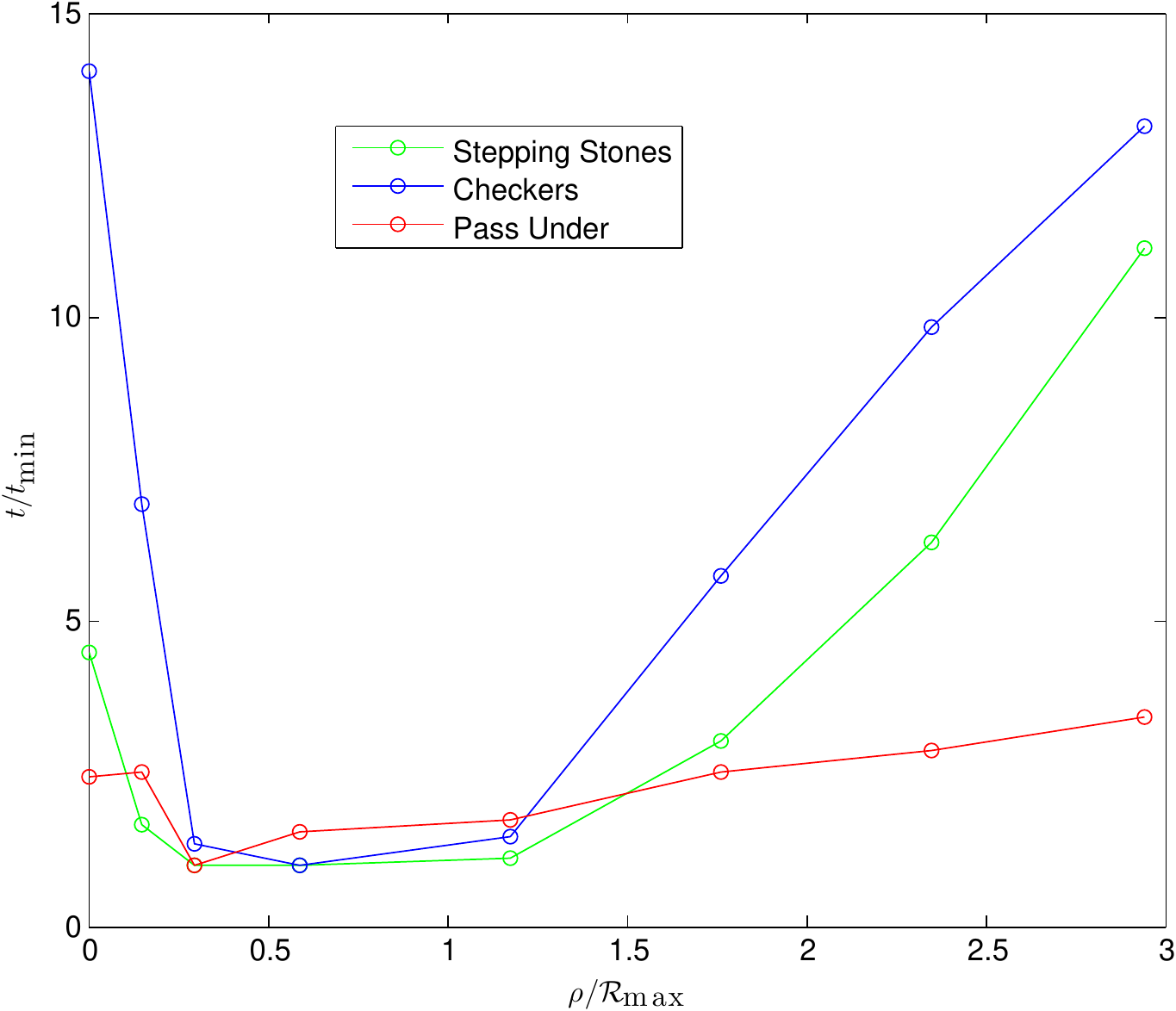}
  \caption{Average performance results from three scenarios, illustrating the relationship between $\rho$ and the rate of convergence. The $y$-axis shows the average time for each scenario, scaled by the data point with the smallest value. The $x$-axis shows how $\rho$ was varied, scaled by $\mathcal{R}_\text{max}$, the furthest distance that the robot is able to step.}
  \label{fig:data}
\end{figure}

To empirically test the effects of $\rho$, we constructed three simple scenarios and ran simulated tests while varying the value of $\rho$. Screenshots of the scenarios can be seen in Fig. \ref{fig:scenarios}. A plot of the results is shown in Fig. \ref{fig:data}. For the ``Stepping Stones'' and ``Checkers'' scenarios, the robot needs to find a sequence of foot placements that can get it across a wide gap. In such cases, Mode Sampling is the primary bottleneck, and equation \ref{eqn:conv_ms} plays the dominant role. This gives us performance results which reflect the theoretical predictions of the lower bound.

In contrast, the ``Pass Under'' scenario requires the robot to pass underneath a sequence of three bars. The floor is clear of holes or obstructions, leaving it wide open for the robot to place its feet anywhere. Furthermore, the overall floor space of the environment is relatively small. These factors result in a scenario where Mode Sampling is a less demanding stage. Instead, the physical obstacle of the overhanging bars results in a narrow passage, making $R$ the deciding variable for $\varepsilon$ in equation \ref{eqn:pg_complete}. Smaller values for $\rho$ may still offer some marginal performance improvements, but it does not appear to be exponential as it is for the other two scenarios.

While the theoretical analysis proposes a value of $\rho = 2\mathcal{R}_\text{max}$ to optimize performance while guaranteeing probabilistic completeness, the empirical data suggests that a value in the range $\frac{1}{2}\mathcal{R}_\text{max} \leq \rho \leq \mathcal{R}_\text{max}$ might be best for performance in practice. A potential strategy could be to schedule the value of $\rho$ so that it begins with a high-performance value and then grows up to the theoretical lower bound over time.

%\subsection{RSS Hyperlinks}
%
%This year, we would like to use the ability of PDF viewers to interpret
%hyperlinks, specifically to allow each reference in the bibliography to be a
%link to an online version of the reference. 
%As an example, if you were to cite ``Passive Dynamic Walking''
%\cite{McGeer01041990}, the entry in the bibtex would read:
%
%{\small
%\begin{verbatim}
%@article{McGeer01041990,
%  author = {McGeer, Tad}, 
%  title = {\href{http://ijr.sagepub.com/content/9/2/62.abstract}{Passive Dynamic Walking}}, 
%  volume = {9}, 
%  number = {2}, 
%  pages = {62-82}, 
%  year = {1990}, 
%  doi = {10.1177/027836499000900206}, 
%  URL = {http://ijr.sagepub.com/content/9/2/62.abstract}, 
%  eprint = {http://ijr.sagepub.com/content/9/2/62.full.pdf+html}, 
%  journal = {The International Journal of Robotics Research}
%}
%\end{verbatim}
%}
%\noindent
%and the entry in the compiled PDF would look like:
%
%\def\tmplabel#1{[#1]}
%
%\begin{enumerate}
%\item[\tmplabel{1}] Tad McGeer. \href{http://ijr.sagepub.com/content/9/2/62.abstract}{Passive Dynamic
%Walking}. {\em The International Journal of Robotics Research}, 9(2):62--82,
%1990.
%\end{enumerate}
%%
%where the title of the article is a link that takes you to the article on IJRR's website. 
%
%
%Linking cited articles will not always be possible, especially for
%older articles. There are also often several versions of papers
%online: authors are free to decide what to use as the link destination
%yet we strongly encourage to link to archival or publisher sites
%(such as IEEE Xplore or Sage Journals).  We encourage all authors to use this feature to
%the extent possible.

\section{Conclusion} 
\label{sec:conclusion}

In this paper we have provided a proof for the probabilistic completeness of w-RPG which represents a worst-case performance of the Randomized Possibility Graph (RPG) algorithm applied to bipedal locomotion planning. We also demonstrated, both theoretically and empirically, how the Possibility Exploration stage of the algorithm allows the overall process to converge more quickly by focusing the effort of the lower-level stages into regions of the environment that are relevant for finding a solution.

There are two crucial limitations to the existing implementation of the RPG algorithm. The first limitation is that it currently only samples foot placements from SE(2), making it unable to handle uneven terrain. We aim to address this in later work by leveraging reachable space representations similar to \citet{tonneau2015reachability}. That should allow the algorithm to handle arbitrary obstacles \textit{and} arbitrary terrain. Second, this implementation currently only extends to quasi-static motion. The work of \citet{dellin2012framework} may provide a natural complementary low-level planner for the RPG, allowing it to plan highly dynamic motions in addition to quasi-static.

%\section*{Acknowledgments}
%
%This work was supported by DARPA grant D15AP00006.

\bibliographystyle{plainnat}
\bibliography{references}

\end{document}